\newcommand{\commentout}[1]{}
\newcommand{\new}{\textit{new}}
\newcommand{\old}{\textit{old}}
\newcommand{\Stay}{\mathit{Stay}}
\newcommand{\Go}{\mathit{Go}}
\newcommand{\Up}{\mathit{Up}}
\newcommand{\Down}{\mathit{Down}}
\newcommand{\calM}{\mathcal{M}}
\newcommand{\reals}{\mathbb{R}}
\newcommand{\E}{\mathop{\mathbb E}}
\newcommand{\veps}{\varepsilon}
\newcommand{\denselist}{\itemsep 0pt\partopsep 0pt}
\newcommand{\scite}{\cite}
\newtheorem{cor}{Corollary}
\newtheorem{obs}{Observation}
\newtheorem{lemma}{Lemma}
\newtheorem{theorem}{Theorem}
\newtheorem{thm}[theorem]{Theorem}
\newtheorem{corollary}[theorem]{Corollary}
\def\QuadSpace{\vspace{0.25\baselineskip}}
\def\HalfSpace{\vspace{0.5\baselineskip}}
\def\EndProof{ \hfill \vrule width 1ex height 1ex depth 0pt \newline }
\newenvironment{proof}{\QuadSpace\par\noindent{\bf Proof}:}{\EndProof\HalfSpace \vspace{-0.15in}}
\newenvironment{sketch}{\QuadSpace\par\noindent{\bf Proof Sketch}:}{\EndProof\HalfSpace \vspace{-0.15in}}
\DeclareMathOperator*{\argmax}{arg\,max}
\newcommand{\ind}{\mathbf{1}}
\title{Planning and Learning with Stochastic Action Sets}
\author{
  Craig Boutilier,
  Alon Cohen,
  Amit Daniely,
  Avinatan Hassidim,\\
  \textbf{Yishay Mansour,
  Ofer Meshi,
  Martin Mladenov,
  Dale Schuurmans}
  \\
  Google Research, Mountain View, CA, USA\\
{\small{\texttt{\{cboutilier,aloncohen,amitdaniely,avinatan,mansour,meshi,schuurmans\}@google.com}}}
}
\begin{document}

\maketitle

\begin{abstract}
In many practical uses of reinforcement learning (RL)
the set of actions available at a given state is a random variable,
with realizations governed by an exogenous stochastic process.
Somewhat surprisingly, the foundations for such sequential decision processes
have been unaddressed.
In this work, we formalize and investigate 
\emph{MDPs with stochastic action sets (SAS-MDPs)} 
to provide these foundations.
We show that optimal policies and value functions in this model have a
structure that admits a compact representation.
From an RL perspective, we show that Q-learning with sampled action sets 
is sound.
In model-based settings, we consider two important special cases:
when individual actions are available with independent probabilities; and a
sampling-based model for unknown distributions. We develop 
poly-time value and policy iteration methods for both cases; and in the first,
we offer a poly-time linear programming solution.
\end{abstract}

\section{Introduction}
\label{sec:intro}

Markov decision processes (MDPs) are the standard model for sequential 
decision making under uncertainty,
and provide the foundations for reinforcement learning (RL).
With the recent emergence of RL as a practical AI technology
in combination with deep learning \cite{mnih2013,mnih2015},
new use cases are arising that challenge basic MDP modeling assumptions.
%
One such challenge is that many practical MDP and RL problems have 
\emph{stochastic sets of feasible actions};
that is, the set $A_s$ of feasible actions at state $s$ 
\emph{varies stochastically} with each visit to $s$.
For instance, in online advertising, the set of available ads
differs at distinct occurrences of the same state
(e.g., same query, user, contextual features),
due to exogenous factors like 
campaign expiration or budget throttling.
In recommender systems with large item spaces, 
often a set of \emph{candidate} recommendations is first generated,
from which top scoring items are chosen;
exogenous factors often induce non-trivial changes in the candidate set.
With the recent application of MDP and RL models in
ad serving and recommendation 
\cite{charikar:stoc99,Li:adkdd2009,archak-mirrokni-muthu:www10,mirrokni:wine12,kearns:uai12,silver:icml13,theocharous:ijcai15,logisticMDPs:ijcai17}, 
understanding how
to capture the stochastic nature of available action sets is critical.

Somewhat surprisingly, this problem seems to have been largely
unaddressed in the literature.
Standard MDP formulations \cite{puterman}
allow each state $s$ to have its own feasible action set $A_s$,
and it is not uncommon to allow the set $A_s$
to be non-stationary or time-dependent.
However, they do not support
the treatment of $A_s$ as a stochastic random variable.
In this work, we: (a) introduce the \emph{stochastic action set MDP (SAS-MDP)}
and provide its theoretical foundations;
(b) describe how to account for stochastic action sets
in model-free RL (e.g.,
Q-learning); and (c) develop tractable algorithms for solving SAS-MDPs in
important special cases.

An obvious way to treat this problem is to embed the 
set of available actions into the state itself.
This provides a useful analytical tool, 
but it does not immediately provide tractable algorithms for learning
and optimization, since each state is augmented with all possible 
\emph{subsets} of actions, 
incurring an exponential blow up in state space size.
To address this issue,
we show that SAS-MDPs possess an important
property: the Q-value of an available action $a$ is
independent of the availability of other actions.
This allows us to prove that optimal policies can be represented compactly
using (state-specific) decision lists (or orderings) over the action set.

This special structure allows one to solve the SAS RL problem effectively
using, for example, Q-learning.  We also devise model-based algorithms that
exploit this policy structure. We develop value and policy iteration
schemes, showing they converge in a polynomial number of iterations
(w.r.t.\ the size of the underlying ``base'' MDP). We also
show that per-iteration complexity is polynomial time for two important 
special forms of action availability distribution: (a)
when action availabilities are independent, both methods are exact;
(b) when the distribution over sets $A_s$ is sampleable, we obtain
approximation algorithms with polynomial sample complexity. In fact,
policy iteration is strongly polynomial under additional 
assumptions (for a fixed discount factor).
We show that a linear program for SAS-MDPs can be
solved in polynomial time as well.
Finally, we offer a simple empirical demonstration of the importance
of accounting for stochastic action availability when computing
an MDP policy.

Additional discussion and full proofs of all results can be found in
a longer version of this paper \cite{sasmdps_full:arxiv18}.

\section{MDPs with Stochastic Action Sets}
\label{sec:sas}

We first introduce SAS-MDPs and provide a simple example illustrating how
action availability impacts optimal decisions.
See \cite{puterman} for more
background on MDPs.

\subsection{The SAS-MDP Model}
\label{sec:sasmodel}

Our formulation of \emph{MDPs with Stochastic Action Sets (SAS-MDPs)} derives 
from a standard, finite-state, finite-action MDP (the \emph{base MDP}) $\calM$,
with $n$ states $S$, \emph{base} actions $B_s$ for $s\in S$,
and transition and reward functions,
$P: S \times B \rightarrow \Delta(S)$ and $r:S\times B \rightarrow \reals$.
We use $p^k_{s,s'}$ and $r^k_s$ to denote the probability of transition to
$s'$ and the accrued reward, 
respectively, when action $k$ is taken at state $s$.
For notational ease, we assume that feasible action sets for each $s\in S$ 
are identical, so $B_s = B$ (allowing distinct base sets at different states has no impact on what follows).
Let $|B| = m$ and $M = |S \times B | =nm$.
We assume an infinite-horizon, discounted objective with fixed
discount rate $\gamma$, $0\leq\gamma<1$.

In a SAS-MDP, the set of actions available at state $s$ at any stage $t$
is a random subset $A^{(t)}_s \subseteq B$.
We assume a family of \emph{action availability distributions}
$P_s\in\Delta(2^B)$
defined over the powerset of $B$.
These can depend on $s\in S$ but are otherwise history-independent, hence
$\Pr(A^{(t)}_s | s^{(1)},\ldots,s^{(t)}) = \Pr(A^{(t)}_s | s^{(t)})$.
Only actions $k\in A^{(t)}_s$ in the realized available action set 
can be executed  at stage $t$.
Apart from this, the dynamics of the MDP is unchanged:
when an (available) action is taken, state transitions and
rewards are prescribed as in the base MDP.
In what follows, we assume that some action is always available,
i.e., $\Pr(A^{(t)}_s = \emptyset) = 0$ for all $s, t$.\footnote{Models
that trigger process termination when $A^{(t)}_s = \emptyset$ are well-defined,
but we set aside this model variant here.} Note that 
a SAS-MDP does not conform to the usual definition
of an MDP.

\subsection{Example}
\label{sec:sasexample}

The following simple MDP shows the importance of accounting for stochastic
action availability when making decisions.
The MDP below has two states. Assume the agent starts at state $s_1$, where
two actions (indicated by directed edges for their transitions) are always 
available: one ($\Stay$) stays at $s_1$, and the
other ($\Go$)
transitions to state $s_2$, both with reward $1/2$. At $s_2$, the action
$\Down$
returns to $s_1$, is always available and has reward 0. A second action
$\Up$ also returns to $s_1$, but is available with
only probability $p$ and has reward 1.
\includegraphics[width=0.5\columnwidth]{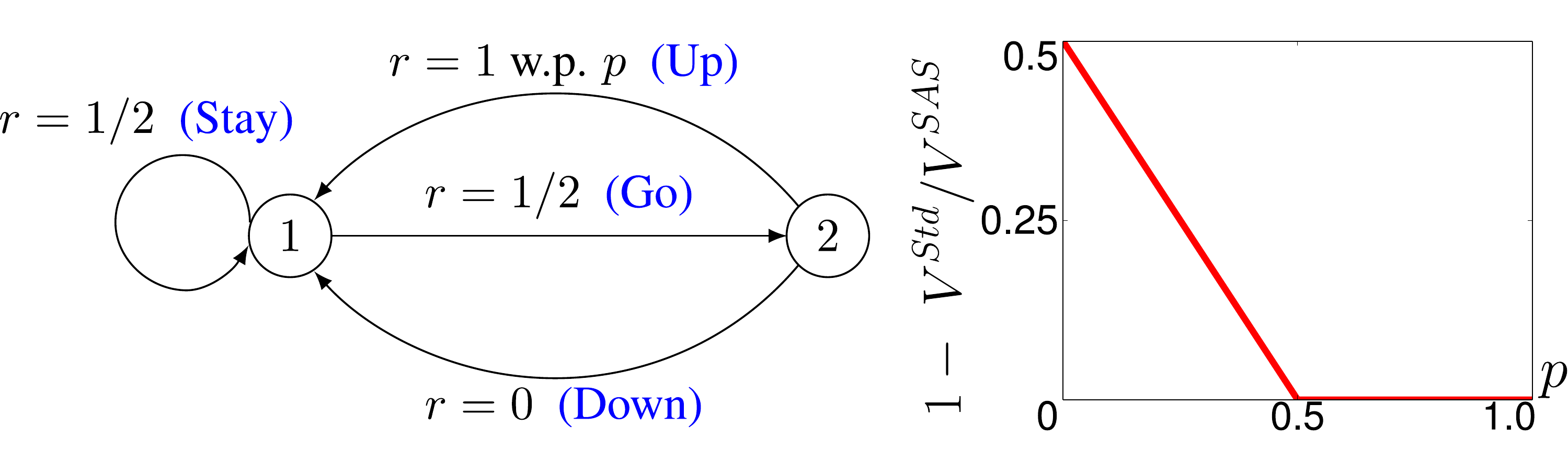}



A naive solution that ignores action availability is as follows:
we first compute the optimal $Q$-function assuming all actions are 
available (this can be derived from the optimal value function, computed
using standard techniques). Then at each stage, we
use the best action available at the current state where actions
are ranked by Q-value. Unfortunately, this leads to a suboptimal policy
when the $\Up$ action has low availability, specifically if $p < 0.5$.

The best naive policy always chooses to move to $s_2$ from $s_1$; at
$s_2$, it picks the best action available. This yields a reward of $1/2$ at
even stages, and an expected reward of $p$ at odd stages. However, by 
anticipating the possibility that action $\Up$
is unavailable at $s_2$, the optimal
(SAS) policy always stays at $s_1$, obtaining reward $1/2$ at all stages. 
For $p < 1/2$, the latter policy dominates the former: the plot on the right
shows the fraction of the optimal (SAS) value \emph{lost} by the naive
policy ($Std$) as a function of the availability probability $p$.
This example also illustrates that as action availability probabilities
approach $1$, the optimal policy for the base MDP is also optimal for 
the SAS-MDP.

\subsection{Related Work}
\label{sec:related}

While a general formulation of MDPs with stochastic action availability
does not appear in the literature, there are two strands of closely
related work. In the bandits literature, \emph{sleeping bandits} are
defined as bandit problems in which the arms available at each stage
are determined randomly or adversarially (sleeping experts are similar, with
complete feedback being provided rather than bandit feedback)
\cite{kleinbergEtAl:MLJ2010,kanadeEtAl:aistats09Sleeping}.
Best action orderings (analogous to our decision list policies for SAS-MDPs)
are often used to define regret in these models. The goal is to develop
exploration policies to minimize regret. Since these models have no state,
if the action reward distributions are known, the optimal policy is
trivial: always take the best \emph{available} action. By contrast,
a SAS-MDP, even a known model, induces a
difficult optimization problem, since the quality of an action depends
not just on its immediate reward, but also on the availability of actions
at reachable (future) states. This is our focus.

The second closely related branch of research comes from the field of
stochastic routing. The ``Canadian Traveller Problem''---the
problem of minimizing travel time in a graph with unavailable edges---was
introduced by Papadimitriou and
Yannakakis~\cite{papadimitriou:shortestpath}, who gave intractability
results (under much weaker assumptions about edge availability, e.g.
adversarial). Poliyhondrous and Tsitsiklis~\cite{polychondrous:recourse}
consider a stochastic version of the problem, where edge availabilities are
random but static (and any edge observed to be unavailable remains so throughout
the scenario). Most similar to our setting is the work of Nikolova and
Karger~\cite{nikolova:canadian}, who discuss the case of resampling
edge costs at each node visit; however, the proposed solution is well-defined
only when the edge costs are finite and does not easily extend to unavailable
actions/infinite edge costs. Due to the specificity of their modeling
assumptions, none of the solutions found in this line of research can be
adapted in a straightforward way to SAS-MDPs.


\section{Two Reformulations of SAS-MDPs}
\label{sec:reformulate}

The randomness of feasible actions means that SAS-MDPs do not
conform to the usual definition of an MDP. In this section, we
develop two reformulations of SAS-MDPs that transform them into
MDPs. We discuss the relative advantages of each,
outline key properties and relationships
between these models, and describe important
special cases of the SAS-MDP model itself.

\subsection{The Embedded MDP}
\label{sec:embedded}

We first consider a reformulation of the SAS-MDP in which
we embed the (realized) available action set into the state space 
itself. This is a straightforward way to recover a standard MDP.
The \emph{embedded MDP} $\calM_e$
for a SAS-MDP
has state space $S_e = \{s\circ A: s\in S, A\subseteq B\}$, with
$s\circ A$ having feasible action set $A$.\footnote{Embedded states
whose embedded action subsets have zero probability are unreachable and
can be ignored.} The history
independence of $P_s$ allows transitions to be defined as:
$$p^k_{s\circ A,s'\circ A'} = 
   P(s'\circ A' | s\circ A, k) = p^k_{s,s'} P_{s'}(A')
, 
\;\;
\forall k\in A.$$
Rewards are defined similarly: $r^k(s\circ A) = r^k(s)$ 
for $k\in A$.

In our earlier example, the embedded MDP has three states:
$s_1\circ\{\Stay,\Go\}, s_2\circ\{\Up,\Down\}, s_2\circ\{\Down\}$
(other action subsets have probability $0$ hence their corresponding
embedded states are unreachable). The feasible actions at each state are
given by the embedded action set, and the only stochastic transition
occurs when $\Go$ is taken at $s_1$: it moves to 
$s_2\circ\{\Up,\Down\}$ with probability $p$ and $s_2\circ\{\Down\}$
with probability $1-p$.

Clearly, the induced reward process and dynamics are Markovian, hence 
$\calM_e$ is in fact an MDP under the usual definition. Given
the natural translation afforded by the embedded MDP, we
view this as providing the basic ``semantic'' underpinnings of
the SAS-MDP model. This translation affords the use of
standard MDP analytical tools and methods.

A (stationary, determinstic, Markovian) policy
$\pi:S_e \rightarrow B$ for $\calM_e$ is restricted so that 
$\pi(s\circ A) \in A$.
The policy backup operator $T^\pi_e$ and Bellman operator 
$T^\ast_e$ for $\calM_e$ decompose naturally as follows:
\begin{small}
\begin{align}
  &T^\pi_e V_e(s\circ A_s)
    = r^{\pi(s\circ A_s)}_s + \nonumber
     \\ &\qquad \gamma\sum_{s'}p^{\pi(s\circ A_s)}_{s,s'}
          \sum_{A_{s'}\subseteq B} P_{s'}(A_{s'})V_e(s'\circ A_{s'})
          \label{eq:embeddedpolicybackup},
\\
   &T^\ast_e V_e(s\circ A_s)
    = \max_{k\in A_s} r^k_s + \nonumber
    \\ &\qquad \gamma\sum_{s'}p^k_{s,s'}
          \sum_{A_{s'}\subseteq B} P_{s'}(A_{s'})V_e(s'\circ A_{s'})
          \label{eq:embeddedbellmanbackup}
\end{align}
\end{small}%
Their fixed points, $V^\pi_e$ and $V^\ast_e$ respectively, can be 
expressed similarly.
%
%

Obtaining an MDP from an SAS-MDP via action-set embedding comes at the
expense of a (generally) exponential blow-up in the size of the state
space, which can increase by a factor of $2^{|B|}$.

\subsection{The Compressed MDP}
\label{sec:compressed}

The embedded MDP provides a natural semantics for SAS-MDPs,
but is problematic from an algorithmic and learning perspective given
the state space blow-up.
Fortunately, 
the history independence of the availability distributions gives rise to 
an effective, compressed representation.
The \emph{compressed MDP} $\calM_c$
recasts the embedded MDP in terms of the original state space, 
using expectations to
express value functions, policies, and backups
over $S$ rather than over
the (exponentially larger) $S_e$. As we will see below, the compressed
MDP induces a blow-up in action space rather than state space, but
offers significant computational benefits.

Formally, the state space for $\calM_c$ is $S$. 
To capture action availability,
the feasible action set
for $s\in S$ is the set of \emph{state policies}, or
mappings
$\mu_s: 2^B \rightarrow B$ satisfying 
$\mu_s(A_s) \in A_s$.
In other words, once we reach $s$, $\mu_s$ dictates what action
to take for any realized action set $A_s$.
A policy for $\calM_c$ is a family $\mu_c = \{\mu_s :s\in S\}$ of such
state policies.  Transitions and rewards
use expectations over $A_s$:
\begin{small}
\begin{align*}
  p^{\mu_s}_{s,s'} 
     = \sum_{A_{s}\subseteq B} P_s(A_{s}) p^{\mu_s(A_s)}_{s,s'}
~~\mbox{and}~~
  r^{\mu_s}_{s} 
     = \sum_{A_{s}\subseteq B} P_s(A_{s}) r^{\mu_s(A_s)}_{s}~.
\end{align*}
\end{small}

In our earlier example, the compressed MDP has only two states,
$s_1$ and $s_2$. Focusing on $s_2$, its ``actions'' in the compressed
MDP are the set of state policies, or
mappings from the realizable available sets
$\{\{\Up,\Down\}, \{\Down\}\}$ into action choices (as above,
we ignore unrealizable action subsets): in this case, there are
two such state policies: 
the first selects $\Up$ for
$\{\Up,\Down\}$ and (obviously) $\Down$ for $\{\Down\}$;
the second selects $\Down$ for
$\{\Up,\Down\}$ and $\Down$ for $\{\Down\}$.

It is not hard to show that 
the dynamics and reward
process defined above
over this compressed state space
and expanded action set (i.e., the set of state policies) are
Markovian. Hence we can define policies, value functions, optimality
conditions, and policy and Bellman backup operators 
in the usual fashion. For instance,
the Bellman and policy backup operators, $T^\star_c$ and $T_{\mu}^c$,
on 
compressed value functions are:
\begin{small}
\begin{align}
T_c^*V_c(s) =& 
     \E_{A_s\subseteq B}\;
        \max_{k\in A_s} r^k_s  + \gamma \sum_{s'} p^k_{s,s'} V_c(s'),
        \label{eq:compressedBellmanOp}
        \\
T^{\mu}_cV_c(s) =& 
     \E_{A_s\subseteq B}\;
        r^{\mu_s(A_s)}_s + \gamma \sum_{s'} p^{\mu_s(A_s)}_{s,s'} V_c(s').
        \label{eq:compressedPolicyOp}
\end{align}
\end{small}

It is easy to see that any state policy $\mu$ induces a Markov chain over
base states, hence we can define a standard $n\times n$ transition matrix
$P^{\mu}$ for such a policy in the compressed MDP, where
$p^{\mu}_{s,s'} = \E_{A\subseteq B} p^{\mu(s)(A)}_{s,s'}$.
When additional independence assumptions hold,
this expectation over subsets can be computed efficiently (see \cref{sec:pda}). 

Critically, we can show that there is a direct ``equivalence''
between policies and their value functions (including optimal policies
and values) in $\calM_c$ and $\calM_e$.
Define the action-expectation operator 
$E: \mathbb{R}^{n2^m} \rightarrow \mathbb{R}^n$ to be
a mapping that compresses a value function $V_e$ for $\calM_e$ into
a value function
$V^e_c$ for $\calM_c$:
\begin{small}
$$
V^e_c(s) 
= EV_e(s) 
=\!\!\! \E_{A_s\subseteq B} V_e(s\circ A_s)
=\!\!\! \sum_{A_s \subseteq B} P_s(A_{s}) V_e(s\circ A_s).
$$
\end{small}%
We emphasize that $E$ transforms an (arbitrary) value function $V_e$
in embedded space
into a new value function $V_c^e$ defined
in compressed space (hence, $V_c^e$ is \emph{not} defined w.r.t.\
$\calM_c$).

\begin{lemma}
\label{lemma1}
$ET^*_e V_e = T_c^*EV_e$. Hence, $T^*_c$ has a unique fixed point $V_c^\ast = EV_e^\ast$.
\end{lemma}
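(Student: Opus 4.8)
The plan is to establish the operator identity $ET^\ast_e V_e = T^\ast_c EV_e$ for an \emph{arbitrary} embedded value function $V_e$ by direct substitution, and then to deduce the fixed-point claim from a standard contraction argument. The essential observation is that in the embedded Bellman backup \eqref{eq:embeddedbellmanbackup}, the continuation term $\sum_{A_{s'}\subseteq B} P_{s'}(A_{s'}) V_e(s'\circ A_{s'})$ is \emph{exactly} $EV_e(s')$, and, crucially, this quantity depends neither on the maximizing action $k$ nor on the realized set $A_s$ at $s$. This decoupling is precisely what the history independence of the availability distributions $P_{s'}$ provides.

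First I would expand the left-hand side using the definition of the action-expectation operator $E$:
\begin{small}
\begin{align*}
ET^\ast_e V_e(s) &= \sum_{A_s\subseteq B} P_s(A_s)\, T^\ast_e V_e(s\circ A_s) \\
&= \sum_{A_s\subseteq B} P_s(A_s) \max_{k\in A_s}\Big[\, r^k_s + \gamma\sum_{s'} p^k_{s,s'}\, EV_e(s')\,\Big],
\end{align*}
\end{small}
where I have replaced the inner continuation sum by $EV_e(s')$. By the definition \eqref{eq:compressedBellmanOp} of the compressed Bellman operator applied to the value function $EV_e$, the right-hand side is
\begin{small}
$$T^\ast_c EV_e(s) = \E_{A_s\subseteq B} \max_{k\in A_s}\Big[\, r^k_s + \gamma\sum_{s'} p^k_{s,s'}\, EV_e(s')\,\Big].$$
\end{small}
Since $\E_{A_s\subseteq B}$ is by definition the $P_s$-weighted sum over realized sets, these two expressions coincide term by term, which establishes the commutation identity.

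For the fixed-point claim, I would appeal to the fact (noted earlier in the text) that $\calM_c$ is a genuine finite MDP, so its Bellman operator $T^\ast_c$ is a $\gamma$-contraction in the sup norm and hence admits a \emph{unique} fixed point by the Banach fixed-point theorem. Since $V^\ast_e$ is the fixed point of $T^\ast_e$, applying the identity yields $EV^\ast_e = ET^\ast_e V^\ast_e = T^\ast_c EV^\ast_e$, so $EV^\ast_e$ is a fixed point of $T^\ast_c$; by uniqueness it must equal $V^\ast_c$.

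I expect the main obstacle to be conceptual rather than computational: correctly recognizing that the nested availability expectation over $A_{s'}$ factors out of the maximization over $k$ and of the outer expectation over $A_s$, so that the embedded continuation depends on $V_e$ only through its compression $EV_e$. This is exactly where history independence of the $P_{s'}$ is used; without it the continuation would couple to the current action and the identity would fail. Once this factorization is in hand the identity is a one-line substitution, and the only remaining care is to confirm that $T^\ast_c$ inherits the $\gamma$-contraction property from the Markovian structure of $\calM_c$ so that uniqueness of the fixed point can be invoked.
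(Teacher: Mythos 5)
Your proof is correct and follows essentially the same route as the paper's: expand $E$, use the factorization $p^k_{s\circ A,s'\circ A'} = p^k_{s,s'}P_{s'}(A')$ so the continuation expectation over $A_{s'}$ collapses to $EV_e(s')$, and recognize the result as $T^\ast_c EV_e(s)$. Your explicit contraction/Banach argument for the fixed-point claim is the standard deduction the paper leaves implicit (its Lemma~\ref{lemma3} supplies the contraction), so there is no substantive difference.
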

\begin{proof}
\begin{small}
\begin{align*}
ET^eV_e(s) & = \E_{A\subseteq B} T^eV_e(s\circ A)\\
           & = \E_{A\subseteq B} 
                 \max_{k\in A} r^k_s +\gamma\sum_{s'\circ A'} 
                              p^k_{s\circ A,s'\circ A'} V_e(s'\circ A') \\
           & = \E_{A\subseteq B} 
                 \max_{k\in A} r^k_s +\gamma\sum_{s'} p^k_{s,s'}
                      \E_{A'\subseteq B} V_e(s'\circ A') \\
           & = \E_{A\subseteq B} 
                 \max_{k\in A} r^k_s +\gamma\sum_{s'} p^k_{s,s'} EV^e(s') \\
           & = T^cEV^e(s')
.
\end{align*}
\end{small}
\end{proof}

\begin{lemma}
\label{lemma2}
Given the optimal value function $V^\ast_c$ for $\calM_c$,
the optimal policy $\pi^\ast_e$ for $\calM_e$ can be constructed 
  directly. Specifically, for any $s\circ A$, the optimal policy
  $\pi^\ast_e(s\circ A)$ and optimal value $V^\ast_e(s\circ A)$ at that
  embedded state can be computed in polynomial time.
\end{lemma}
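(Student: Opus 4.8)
The plan is to leverage \cref{lemma1}, which already tells us that the optimal compressed value function encodes, through the action-expectation operator $E$, exactly the information needed to recover $V^\ast_e$. The starting point is that $\calM_e$ is a genuine MDP, so $V^\ast_e$ is the unique fixed point of $T^\ast_e$ as given in \cref{eq:embeddedbellmanbackup}, and the policy that acts greedily with respect to $V^\ast_e$ is optimal by standard MDP theory \cite{puterman}. The key move is to observe that the inner summation $\sum_{A'\subseteq B} P_{s'}(A') V^\ast_e(s'\circ A')$ appearing in that backup is precisely $EV^\ast_e(s') = V^\ast_c(s')$, by the definition of $E$ together with \cref{lemma1}.

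Substituting this identity into \cref{eq:embeddedbellmanbackup} collapses the exponential sum over embedded successor states and yields
$$V^\ast_e(s\circ A) = \max_{k\in A}\left[\, r^k_s + \gamma\sum_{s'} p^k_{s,s'} V^\ast_c(s')\,\right].$$
I would then define the base-state Q-function $Q^\ast(s,k) = r^k_s + \gamma\sum_{s'} p^k_{s,s'} V^\ast_c(s')$ and record the crucial structural fact that $Q^\ast(s,k)$ depends only on the base state $s$ and the action $k$, and not at all on the realized available set $A$. Consequently $V^\ast_e(s\circ A) = \max_{k\in A} Q^\ast(s,k)$ and the optimal embedded policy is $\pi^\ast_e(s\circ A) = \arg\max_{k\in A} Q^\ast(s,k)$; since this is exactly the greedy policy with respect to the optimal value function, it is optimal for $\calM_e$.

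For the polynomial-time claim, observe that given the vector $V^\ast_c\in\reals^n$, each quantity $Q^\ast(s,k)$ is a single inner product requiring $O(n)$ arithmetic operations, so all $Q$-values at a fixed $s$ can be tabulated in $O(nm)$ time; computing the maximum and the argmax over $k\in A$ then costs a further $O(m)$. Hence both $\pi^\ast_e(s\circ A)$ and $V^\ast_e(s\circ A)$ are obtained in time polynomial in the size $M = nm$ of the base MDP, independently of $|A|$ or of the number of embedded states.

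I expect no serious technical obstacle here: the whole argument hinges on the single substitution that turns the inner expectation into $V^\ast_c$ via \cref{lemma1}, after which the statement reduces to the textbook fact that acting greedily with respect to the optimal value function is optimal. The one point that warrants care is making precise that the \emph{same} scalar $V^\ast_c(s')$ correctly replaces the inner sum \emph{for every} available set $A$ whose greedy action transitions through $s'$ -- that is, that the availability-independence of $Q^\ast(s,\cdot)$ is a consequence of the history-independence of the $P_{s'}$ already baked into \cref{eq:embeddedbellmanbackup}, and not an extra assumption.
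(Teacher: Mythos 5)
Your proposal is correct and follows essentially the same route as the paper's own (very terse) proof sketch: compute a one-step backup of $V^\ast_c$ for each available action $k\in A$ (your $Q^\ast(s,k)$), take the maximizing action as $\pi^\ast_e(s\circ A)$ and its backed-up value as $V^\ast_e(s\circ A)$. Your additional justification---substituting $EV^\ast_e(s') = V^\ast_c(s')$ from \cref{lemma1} into the fixed-point equation of \cref{eq:embeddedbellmanbackup} to show the backup is availability-independent---is exactly the reasoning the paper leaves implicit, so you have simply made the sketch rigorous rather than taken a different path.
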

\begin{sketch}
Given $s\circ A$, the expected value of each action in $k\in A$ 
    can be computed using a one-step backup of $V^\ast_c$.
Then $\pi^\ast_e(s\circ A)$ is the action with maximum value,
and $V^\ast_e(s\circ A)$ is its backed-up expected value.
\end{sketch}

Therefore, it suffices to work directly with the compressed MDP, which
allows one to use value functions (and $Q$-functions) over the original 
state space.  The price is that one needs to use state policies,
since the best action at $s$ depends on the available set $A_s$.
In other words, while the embedded MDP causes an exponential blow-up in
state space, the compressed MDP causes an exponential blow-up in
action space.  We now turn to assumptions that allow us to
effectively manage this action space blow-up.


\subsection{Decision List Policies}
\label{sec:lists}

The embedded and compressed MDPs do not, \emph{prima facie},
offer much computational or representational advantage,
since they rely on an exponential increase in the size of the state
space (embedded MDP) or decision space (compressed MDP).
Fortunately, SAS-MDPs have optimal policies with a useful, concise form.
We first focus on the policy representation itself, 
then describe the considerable computational
leverage it provides.


A \emph{decision list (DL) policy} $\mu$ is a type of policy for
$\calM_e$ that can be expressed compactly using $O(nm \log m)$ space
and executed efficiently.
Let $\Sigma_B$ be the set of permutations over base action
set $B$. A DL policy $\mu: S \rightarrow \Sigma_B$ associates a permutation
$\mu(s) \in \Sigma_B$ with each state, and is executed at embedded
state $s\circ A$ by executing $\min \{i \in \{1,\ldots,m\} : \mu(s)(i) \in A\}$.
In other words,
whenever base state $s$ is encountered and $A$ is the available set, the
first action $k\in A$ in the order dictated by DL $\mu(s)$ is executed.
Equivalently, we can view $\mu(s)$ as a state policy $\mu_s$ for
$s$ in 
$\calM_c$.
In our earlier example, one DL $\mu(s_2)$ is $[\Up,\Down]$, which requires
taking (base) action $\Up$ if it is available, otherwise taking $\Down$.

For any SAS-MDP, we have optimal DL policies:
\begin{thm}
  $\calM_e$ has an optimal policy that can be represented
  using a decision list. The same policy is optimal for the 
  corresponding $\calM_c$.
\end{thm}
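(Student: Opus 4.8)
The plan is to exhibit an optimal policy for $\calM_e$ that depends on each embedded state $s\circ A$ only through a fixed, state-specific ranking of the base actions, and to show that such a ranking can always be extracted from the optimal value function. The key structural fact driving everything is the claim highlighted in the introduction: in $\calM_e$, the optimal Q-value of an available action $k$ at $s\circ A$ does not depend on $A$ (beyond the requirement $k\in A$). Concretely, I would define, for each $s$ and each base action $k$, the quantity
\begin{small}
\begin{equation*}
Q^\ast_c(s,k) = r^k_s + \gamma \sum_{s'} p^k_{s,s'} V^\ast_c(s'),
\end{equation*}
\end{small}
using the compressed optimal value $V^\ast_c = EV^\ast_e$ guaranteed by \cref{lemma1}. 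By the decomposition of $T^\ast_e$ in \eqref{eq:embeddedbellmanbackup} together with the identity $V^\ast_c(s') = \E_{A'} V^\ast_e(s'\circ A')$, the one-step backup of $V^\ast_e$ at $s\circ A$ equals $\max_{k\in A} Q^\ast_c(s,k)$; this is exactly the content of \cref{lemma2}'s sketch. The crucial observation is that $Q^\ast_c(s,k)$ is a function of $s$ and $k$ alone, independent of which set $A$ happens to be realized.

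Given this, I would construct the DL policy directly. For each state $s$, fix any permutation $\mu(s)\in\Sigma_B$ that lists the base actions in nonincreasing order of $Q^\ast_c(s,k)$ (breaking ties arbitrarily). I then claim the resulting decision list is optimal for $\calM_e$. The argument is that, at every embedded state $s\circ A$, executing the first action of $\mu(s)$ lying in $A$ selects precisely $\arg\max_{k\in A} Q^\ast_c(s,k)$, since the permutation orders actions by their ($A$-independent) Q-values and we scan until we hit an available one. Therefore this policy is greedy with respect to $V^\ast_e$ at every embedded state, i.e.\ it achieves the max in \eqref{eq:embeddedbellmanbackup} everywhere. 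By the standard MDP optimality principle (a policy greedy w.r.t.\ the optimal value function is optimal), this DL policy is optimal for $\calM_e$.

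For the second sentence of the theorem, I would invoke the equivalence between $\calM_e$ and $\calM_c$ established by \cref{lemma1,lemma2}. Any DL policy $\mu$ is, as noted in the text, also a legitimate state policy $\mu_s$ for $\calM_c$ via $\mu_s(A)=\mu(s)(\min\{i:\mu(s)(i)\in A\})$. The compressed policy backup \eqref{eq:compressedPolicyOp} applied to $V^\ast_c$ takes the expectation over $A_s$ of the Q-value of the action $\mu_s(A_s)$ prescribes; since for each realized $A_s$ this is the maximizing action, the expectation matches the right-hand side of the compressed Bellman operator \eqref{eq:compressedBellmanOp}. Hence $T^\mu_c V^\ast_c = T^\ast_c V^\ast_c = V^\ast_c$, so $\mu$ attains the optimal compressed value and is optimal for $\calM_c$ as well.

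The main obstacle, and the step I would be most careful about, is justifying the $A$-independence of $Q^\ast_c(s,k)$ cleanly — that is, making rigorous the passage from the embedded backup \eqref{eq:embeddedbellmanbackup} to a per-action quantity that factors through $V^\ast_c$ rather than through the full embedded value $V^\ast_e$. This rests on the history independence of the availability distributions, which lets the inner sum $\sum_{A'} P_{s'}(A')V^\ast_e(s'\circ A')$ collapse to $V^\ast_c(s')$ uniformly in $k$ and $A$; \cref{lemma1} already packages exactly this collapse, so the work is mostly in citing it correctly and verifying that the greedy-policy optimality principle transfers verbatim to $\calM_e$ (which it does, since $\calM_e$ is a genuine discounted MDP). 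The tie-breaking in the permutation is harmless because any maximizer of $Q^\ast_c(s,\cdot)$ over $A$ yields the same backed-up value.
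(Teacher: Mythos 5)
Your proof is correct, and it rests on exactly the same structural fact as the paper's: the optimal Q-value of an available action $k$ at embedded state $s\circ A$ depends only on $s$ and $k$, not on the realized set $A$, because the inner expectation $\sum_{A'} P_{s'}(A')V^\ast_e(s'\circ A')$ collapses to $V^\ast_c(s')$ uniformly in $k$ and $A$. But you deploy this fact along a genuinely different route. The paper argues by contradiction: it asserts (via an unspecified ``simple inductive argument'') that if no DL policy were optimal, there would exist a state $s$, sets $A\neq A'$, and actions $j\neq k$ common to both, at which some optimal policy makes choices forcing conflicting strict Q-value inequalities, and the $A$-independence then renders those conditions mutually contradictory. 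You instead construct the optimal DL directly: sort the base actions at each $s$ by the $A$-independent quantity $Q^\ast_c(s,k)$, observe that the resulting decision list selects $\arg\max_{k\in A}Q^\ast_c(s,k)$ at every embedded state and is therefore greedy w.r.t.\ $V^\ast_e$, and invoke the standard fact that a policy greedy w.r.t.\ the optimal value function of a discounted MDP is optimal. Your route buys three things: it avoids the paper's unspecified inductive step; it produces the optimal DL explicitly (the same sort-by-Q-value construction the paper later exploits in its VI and PI algorithms); and it proves the theorem's second sentence --- optimality in $\calM_c$ --- explicitly via $T^\mu_c V^\ast_c = T^\ast_c V^\ast_c = V^\ast_c$, a claim the paper's sketch never addresses. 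What the paper's formulation offers in exchange is a characterization of how any non-DL optimal policy would have to be inconsistent, which is suggestive of why the DL structure is forced. The only step you should make fully explicit is the last one: $T^\mu_c$ is itself a contraction whose unique fixed point is $V^\mu_c$, so $T^\mu_c V^\ast_c = V^\ast_c$ indeed implies $V^\mu_c = V^\ast_c$, i.e.\ $\mu$ is optimal for $\calM_c$.
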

\begin{sketch}
Let $V^\ast$ be the (unique) optimal value function for $\calM_e$
    and $Q^\ast$ its corresponding Q-function (see Sec.~\ref{sec:valueiteralg}
    for a definition).
A simple inductive argument shows
that no DL policy is optimal
only if there is some state $s$, action sets $A \neq A'$, and
	(base) actions $j \neq k$, s.t.\ (i) $j,k \in A, A'$; (ii) for
    some optimal policy
    $\pi^\ast(s\circ A) = j$ and
    $\pi^\ast(s\circ A') = k$;
	and (iii) either $Q^\ast(s\circ A,j) > Q^\ast(s\circ A,k)$ or
    or $Q^\ast(s\circ A',k) > Q^\ast(s\circ A',j)$.
However, the fact that the optimal
Q-value of any action $k\in A$ at state $s\circ A$ is independent of the
other actions in $A$ (i.e., it depends only on the base state) implies that
these conditions are mutually contradictory.
\end{sketch}

\subsection{The Product Distribution Assumption}
\label{sec:pda}

The DL form
ensures that optimal policies and value functions
for SAS-MDPs can be expressed polynomially in the size
of the base MDP $\calM$.
However, their computation 
still requires the computation of
expectations over action subsets,
e.g., in Bellman or policy backups
(Eqs.~\ref{eq:compressedBellmanOp},~\ref{eq:compressedPolicyOp}).
This will generally be infeasible without
some assumptions on the form the action availability distributions $P_s$.

One natural assumption is the \emph{product distribution assumption (PDA)}.
PDA holds when $P_s(A)$ is a 
product distribution where each action
$k\in B$ is available with probability $\rho^k_s$, and subset
$A \subseteq B$ has probability 
$\rho_s^A = \prod_{k\in A} \rho^k_s \prod_{k\in B\setminus A} (1-\rho^k_s)$.
This assumption is a reasonable approximation
in the settings discussed above, 
where state-independent exogenous processes determine the availability of 
actions 
(e.g., the probability that one advertiser's campaign
has budget remaining is roughly
independent of another advertiser's).
For ease of notation, we
assume that $\rho^k_s$ is identical for all states $s$ 
(allowing different availability probabilities
across states has no impact on what follows). 
To ensure the MDP is well-founded, we assume some
default action (e.g., no-op) is always available.%
\footnote{We omit the default action
from analysis for ease of exposition.}
Our earlier running example trivially satisifes PDA: at $s_2$,
$\Up$'s availability probability ($p$) is independent of the
availability of $\Down$ (1).

When the PDA holds, the DL form of policies allows the expectations 
in policy and Bellman backups to be computed efficiently
without enumeration of
subsets $A\subseteq B$. For example, given a fixed DL policy $\mu$,
we have
\begin{small}
\begin{align}
T^{\mu}_cV_c(s) &= 
\sum_{i=1}^{m} \,\left[\prod_{j=1}^{i-1} (1-\rho^{\mu(s)(j)}_s)\right]\, \rho^{\mu(s)(i)}_s 
 \Bigg( r_s^{\mu(s)(i)} \nonumber \\
 &~ + \gamma \sum_{s'} p^{\mu(s)(i)}_{s,s'} V_c(s') \Bigg).
 \label{eq:pdabellman}
\end{align}
\end{small}%
The Bellman operator has a similar form.
We exploit this below to develop tractable value iteration and policy
iteration algorithms, as well as a practical LP formulation.

\subsection{Arbitrary Distributions with Sampling (ADS)}
\label{sec:ads}

We can also handle the case where, at each state, the availability
distribution is unknown, 
but is sampleable.
In the
longer version of the paper \cite{sasmdps_full:arxiv18},
we show that samples can be used to approximate expectations
w.r.t.\ available action subsets, and that the required sample 
size is polynomial in $|B|$, and not in the size of the \emph{support} of the
distribution. 

Of course, when we discuss algorithms for policy
computation, this approach does not allow us to compute the
optimal policy exactly. However, it has important implications for
sample complexity of learning algorithms like Q-learning. We note
that the ability to sample available action subsets is quite natural
in many domains. For instance, in ad domains, it may not be possible to
model the process by which eligible ads are generated (e.g.,
involving specific and evolving advertiser targeting criteria,
budgets, frequency capping, etc.). But the eligible subset of ads
considered for each impression opportunity
is an action-subset sampled from this process.

Under ADS, we compute approximate backup operators as follows.  Let
$\mathcal{A}_s = \{A_s^{(1)},\ldots,A_s^{(T)}\}$ be an i.i.d. sample of size
$T$ of action subsets in state $s$. For a subset of actions $A$, an index $i$
and a decision list $\mu$, define 
$I_{[i,A,\mu]}$ to be 1 if $\mu(i) \in A$
and for each $j < i$ we have $\mu(j) \not\in A$, or 0 otherwise.
Similar to \cref{eq:pdabellman}, we define:
{
\small
\begin{align*}
T^{\mu}_cV_c(s) &\!=\! \frac{1}{T} \sum_{t=1}^T 
\sum_{i=1}^{m} I_{\left[i,A_s^{(t)}, \mu(s)\right]}
\!
 \Big( r_s^{\mu(s)(i)} \!+\! \gamma \!\sum_{s'} p^{\mu(s)(i)}_{s,s'} V_c(s') \Big).
\end{align*}
}%
In the sequel, we focus largely on PDA; 
in most cases equivalent results can be derived in the ADS model.

\section{Q-Learning with the Compressed MDP}
\label{sec:qlearn}

Suppose we are faced with learning the optimal value function or
policy for an SAS-MDP from a collection of trajectories. 
The (implicit)
learning of the transition dynamics and rewards
can proceed as usual;
the novel aspect of the SAS model is that the action availability distribution
must also be considered.
Remarkably, Q-learning can be readily augmented to incorporate
stochastic action sets: we require only that our training trajectories
are augmented with the set of actions that were available at
each state,
$$\ldots s^{(t)}, A^{(t)}, k^{(t)}, r^{(t)},
         s^{(t+1)}, A^{(t+1)}, k^{(t+1)}, r^{(t+1)}, \ldots,$$
where:
$s^{(t)}$ is the realized state at time $t$ (drawn from distribution
$P(\cdot|s^{(t-1)}, k^{(t-1)})$);
$A^{(t)}$ is the realized available set at time $t$, drawn from
$P_{s^{(t)}}$; $k^{(t)}\in A^{(t)}$ is the action taken;
and $r^{(t)}$ is the realized reward.
Such augmented trajectory data is typically available. In particular,
the required sampling of available action sets is usually feasible
(e.g., in ad serving as discussed above).
%

\emph{SAS-Q-learning} can be applied directly
to the compressed MDP $\calM_c$, requiring only a
minor modification of the standard Q-learning update for the base 
MDP. We simply require that each Q-update maximize over
the \emph{realized available actions} $A^{(t+1)}$:
\begin{small}
\begin{align*}
Q^{\new}(s^{(t)},k^{(t)}) &\leftarrow 
	(1-\alpha_t) Q^{\old}(s^{(t)},k^{(t)}) \\ 
    &\quad +
    \alpha_t [r^{(t)} + \gamma \max_{k\in A^{(t+1)}} Q^{\old}(s^{(t+1)},k)]~.
\end{align*}
\end{small}%
Here $Q^{\old}$ is the previous $Q$-function estimate and 
$Q^{\new}$ is the updated estimate, thus it encompasses both
online and batch Q-learning, experience replay, etc.; and
$0\leq \alpha_t < 1$ is our (adaptive) learning rate.

It is straightforward to show that, under the usual exploration conditions,
SAS-Q-learning will converge to the optimal Q-function for the compressed MDP,
since the expected maximum over sampled 
action sets at any particular state will converge to the expected maximum 
at that state.
\begin{thm}
The SAS-Q-learning algorithm will converge w.p.\ 1
to the optimal Q-function for the 
(discounted, infinite-horizon) compressed MDP $\calM_c$ if the
usual stochastic approximation requirements are satisfied. That is, if
(a) rewards are bounded and (b) the subsequence of learning rates
$\alpha_{t(s,k)}$ applied to $(s,k)$ satisfies
$\sum \alpha_{t(s,k)} = \infty$ and
$\sum \alpha^2_{t(s,k)} < \infty$ for all state-action pairs $(s,k)$
	(see, e.g., \cite{watkins:mlj92}).
\end{thm}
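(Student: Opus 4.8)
The plan is to recognize SAS-Q-learning as a stochastic-approximation iteration for a contraction operator on $Q$-functions, and then invoke the standard asynchronous Q-learning convergence theorem. First I would pin down the target of the recursion. Using the unique fixed point $V_c^\ast$ of $T_c^\ast$ guaranteed by Lemma~\ref{lemma1}, define the compressed optimal $Q$-function $Q_c^\ast(s,k) = r_s^k + \gamma \sum_{s'} p^k_{s,s'} V_c^\ast(s')$, so that $V_c^\ast(s) = \E_{A_s\subseteq B}\max_{k\in A_s} Q_c^\ast(s,k)$. Correspondingly I would introduce the operator $H$ on $Q$-functions given by $(HQ)(s,k) = r_s^k + \gamma \sum_{s'} p^k_{s,s'}\,\E_{A'\subseteq B}\max_{j\in A'} Q(s',j)$, and verify by direct substitution that $H Q_c^\ast = Q_c^\ast$, so $Q_c^\ast$ is a fixed point of $H$.

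The crucial structural step — and the only genuine departure from textbook Q-learning — is to show that $H$ is a $\gamma$-contraction in the sup norm despite the extra expectation over the random available set $A'$. Here I would argue that for each fixed $A'$ the map $Q \mapsto \max_{j\in A'} Q(\cdot,j)$ is a non-expansion, since $|\max_{j\in A'} Q_1(s',j) - \max_{j\in A'} Q_2(s',j)| \le \max_{j} |Q_1(s',j) - Q_2(s',j)| \le \|Q_1 - Q_2\|_\infty$, and that averaging over $A'$ preserves the non-expansion because $|\E f - \E g| \le \E|f-g| \le \|Q_1 - Q_2\|_\infty$. Multiplying by $\gamma < 1$ then yields a contraction factor of $\gamma$, so $Q_c^\ast$ is the \emph{unique} fixed point of $H$.

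Next I would identify the SAS-Q-learning update with the stochastic-approximation recursion $Q^{\new}(s^{(t)},k^{(t)}) = (1-\alpha_t)Q^{\old}(s^{(t)},k^{(t)}) + \alpha_t\big[(HQ^{\old})(s^{(t)},k^{(t)}) + w_t\big]$. The key identity is that the sampled target $r^{(t)} + \gamma \max_{k\in A^{(t+1)}} Q^{\old}(s^{(t+1)},k)$ has conditional expectation exactly $(HQ^{\old})(s^{(t)},k^{(t)})$: this is where the history independence of $P_s$ enters, since $s^{(t+1)}\sim P(\cdot\mid s^{(t)},k^{(t)})$ and $A^{(t+1)}\sim P_{s^{(t+1)}}$ reproduce the nested $\sum_{s'} p^k_{s,s'}\,\E_{A'}$ averaging in $H$. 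Consequently the residual $w_t$ is conditionally zero-mean, and under bounded rewards (hypothesis (a)) together with $\gamma<1$ the iterates $Q^{\old}$ stay uniformly bounded, so $w_t$ has conditionally bounded variance.

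Finally I would assemble these facts to close the argument. With the $\gamma$-contraction $H$, the conditionally zero-mean bounded-variance noise, the learning-rate conditions $\sum \alpha_{t(s,k)} = \infty$ and $\sum \alpha^2_{t(s,k)} < \infty$ from hypothesis (b), and the standard requirement (from the usual exploration conditions) that every pair $(s,k)$ is updated infinitely often, the hypotheses of the asynchronous stochastic-approximation theorem (cf.~\cite{watkins:mlj92}) are met, yielding almost-sure convergence of $Q^{\old}$ to the unique fixed point $Q_c^\ast$. I expect the contraction argument of the second paragraph to be the main obstacle; once the non-expansiveness of $Q\mapsto \E_{A'}\max_{j\in A'} Q(\cdot,j)$ is established, the remainder is a routine instantiation of the classical result.
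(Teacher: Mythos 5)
Your proposal is correct and follows essentially the same route as the paper, which justifies this theorem only by a one-line appeal to standard stochastic-approximation results for Q-learning (citing \cite{watkins:mlj92}) on the grounds that the sampled maximum over the realized action set is an unbiased estimate of the expected maximum. Your write-up simply makes that reduction explicit---defining the compressed Bellman operator on $Q$-functions, verifying it is a $\gamma$-contraction in the sup norm, and checking the conditional unbiasedness of the sampled target---which is exactly the argument the paper leaves implicit.
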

Moreover, function approximation techniques,
such as DQN \cite{mnih2015},
can be directly applied with the same action set-sample maximization.
Implementing an optimal policy is also straightforward: given a state $s$
and the realization $A_s$ of the available actions, one simply executes
$\arg\max_{k\in A_s} Q(s,k)$.

We note that extracting the optimal value function $V_c(s)$
for the compressed MDP from the learned Q-function is not viable without
some information about the action availability distribution.
Fortunately, one need not know
the expected value at a state to implement
the optimal policy.\footnote{It is, of course, straightforward to learn 
an optimal value function if desired.}

\section{Value Iteration in the Compressed MDP}
\label{sec:valueiter}

Computing a value function for 
$\calM_c$, with its ``small'' state space $S$,
suffices to execute an optimal policy.
We develop an efficient \emph{value iteration (VI)} method to do this.

\subsection{Value Iteration}
\label{sec:valueiteralg}

Solving an SAS-MDP using VI
is challenging in general due to the required expectations over
action sets. 
However, under PDA, we can derive an
efficient VI algorithm whose complexity depends only polynomially
on the base set size $|B|$.
%

Assume a current iterate $V^t$, where
$
V^t(s) = \E_{A_s} [\max_{k\in A_s} Q^t(s,k) ]
$.
We compute $V^{t+1}$ as follows:
\begin{itemize}\denselist
  \item For each $s\in S, k\in B$, compute its $(t+1)$-stage-to-go Q-value:
    $Q^{t+1}(s,k) = r^k_s + \gamma \sum_{s'} p^k_{s,s'} V^t(s').$
  \item Sort these Q-values in descending order.
For convenience, we re-index each action by 
    its Q-value rank (i.e., $k_{(1)}$ is the action with largest Q-value, and $\rho_{(1)}$ is its probability,  
    $k_{(2)}$ the second-largest, etc.).
  \item For each $s\in S$, compute its $(t+1)$-stage-to-go value:
\begin{small}
\begin{align*}
 V^{t+1}(s) & = \E\nolimits_{A_s} \left[\max_{k\in A_s} Q^{t+1}(s,k)\right]\\
 & = \sum_{i=1}^{m-1} \left( \prod_{j=1}^{i-1} (1 - \rho_{(j)}) \right) \rho_{(i)} Q^{t+1}(s,k_{(i)})
.
 \end{align*} 
\end{small}
\end{itemize}

Under ADS, we use the approximate Bellman operator:
\begin{small}
\begin{align*}
\widehat{V}^{t+1}(s) &= \E\nolimits_{A_s} \left[\max_{k\in A_s} \widehat{Q}^{t+1}(s,k)\right]
\\
&= \frac{1}{T} \sum_{t=1}^T 
\sum_{i=1}^{m} I_{\left[i,A_s^{(t)}, \mu(s)\right]} \widehat{Q}^{t+1}(s,\mu(s)(i))~,
 \end{align*} 
\end{small}%
where $\mu(s)$ is the DL resulting from sorting $\widehat{Q}^{t+1}$-values.

The Bellman operator under PDA is tractable:
\begin{obs} 
    \label{obs:VIperiteration}
    The compressed Bellman operator $T^*_c$ 
    can be computed in $O(n m\log m)$ time.
\end{obs}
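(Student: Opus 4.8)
The plan is to analyze, state by state, the PDA backup formula for $V^{t+1}(s)$ from \cref{sec:valueiteralg}, and to show that each of its three constituent steps stays within budget. Fix a state $s$. First I would form, for each $k\in B$, the one-step value $Q^{t+1}(s,k) = r^k_s + \gamma\sum_{s'} p^k_{s,s'} V_c(s')$; these are exactly the scalars a standard Bellman sweep already produces, yielding $m$ numbers. Next I would sort them in descending order to obtain the ranking $k_{(1)},\ldots,k_{(m)}$ with availability probabilities $\rho_{(1)},\ldots,\rho_{(m)}$. This is the only super-linear step and is the source of the $\log m$ factor, costing $O(m\log m)$ per state.

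The crux is the final step, evaluating
$$V^{t+1}(s) = \sum_{i=1}^{m-1}\left(\prod_{j=1}^{i-1}(1-\rho_{(j)})\right)\rho_{(i)}\, Q^{t+1}(s,k_{(i)}),$$
which is correct because, under PDA, the weight $\left(\prod_{j=1}^{i-1}(1-\rho_{(j)})\right)\rho_{(i)}$ is precisely the probability that $k_{(i)}$ is the best available action (every strictly higher-ranked action is absent while $k_{(i)}$ is present), so the sum equals $\E_{A_s}[\max_{k\in A_s} Q^{t+1}(s,k)]$. Evaluated term by term, each prefix product $\prod_{j=1}^{i-1}(1-\rho_{(j)})$ appears to cost $O(i)$, giving an $O(m^2)$ per-state total that would violate the bound. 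The key observation --- and the step I expect to be the main, if modest, obstacle --- is that the prefix products can be carried forward incrementally: setting $c_i = \prod_{j=1}^{i-1}(1-\rho_{(j)})$, we have $c_1 = 1$ and $c_{i+1} = c_i(1-\rho_{(i)})$, so each summand is computed in $O(1)$ amortized time and the whole expectation in $O(m)$.

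Summing the per-state costs, the sort dominates at $O(m\log m)$ while the ranking-evaluation and the expectation each contribute $O(m)$, so one state's backup is $O(m\log m)$; aggregating over the $n$ states gives the claimed $O(nm\log m)$ bound for $T^*_c$. Here I am isolating the cost specific to the stochastic-action-set structure (the sort plus the incremental expectation); the matrix-vector products $\sum_{s'} p^k_{s,s'} V_c(s')$ are the usual value-iteration cost and fit within this accounting when transition rows have bounded support (and otherwise add only the standard dense-transition term common to any sweep). The only genuinely new algorithmic content is thus the telescoping prefix-product evaluation, with the sort accounting for the logarithmic factor.
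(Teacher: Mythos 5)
Your proof is correct and follows essentially the same route as the paper: the per-state cost is dominated by the $O(m\log m)$ sort of Q-values, and the prefix products $\prod_{j<i}(1-\rho_{(j)})$ are computed incrementally in linear time (the paper notes this as computing the availability products ``via caching''), giving $O(nm\log m)$ overall. Your explicit flagging that the transition backups $\sum_{s'} p^k_{s,s'}V_c(s')$ are accounted as the standard sweep cost matches the paper's implicit accounting when it compares against the $O(nm)$ base-MDP figure.
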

Therefore the per-iteration time complexity of VI for $\calM_c$
compares favorably to the $O(n m)$ time of VI in the base MDP.
The added complexity arises from the need to sort Q-values.%
\footnote{The products of the action availability probabilities
can be computed in linear time via caching.} 
Conveniently, this sorting process immediately
provides the desired DL state policy for $s$.

Using standard arguments, we obtain the following
results, which immediately yield a polytime approximation method.
\begin{lemma}
\label{lemma3}
$T^*_c$ is a contraction with modulus $\gamma$
i.e., 
$||T^*_c v_c - T^*_c v'_c|| \leq \gamma ||v_c - v'_c||$.
\end{lemma}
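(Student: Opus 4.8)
The plan is to prove the $\gamma$-contraction in the sup-norm $\|\cdot\|$ by the standard argument that combines non-expansiveness of the pointwise maximum with the fact that each row of the transition kernel is a probability distribution. First I would fix an arbitrary base state $s$ and two value functions $v_c, v'_c$. Writing $Q(k) = r^k_s + \gamma\sum_{s'} p^k_{s,s'} v_c(s')$ and $Q'(k) = r^k_s + \gamma\sum_{s'} p^k_{s,s'} v'_c(s')$ for the one-step backed-up action values under each, the definition of $T^*_c$ in Eq.~\eqref{eq:compressedBellmanOp} gives $T^*_c v_c(s) = \E_{A_s}[\max_{k\in A_s} Q(k)]$ and likewise for $v'_c$.

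The first key step is the elementary inequality that the maximum is non-expansive: for any finite set $A_s$, $|\max_{k\in A_s} Q(k) - \max_{k\in A_s} Q'(k)| \le \max_{k\in A_s} |Q(k) - Q'(k)|$. Since the reward terms cancel, $Q(k) - Q'(k) = \gamma\sum_{s'} p^k_{s,s'}(v_c(s') - v'_c(s'))$, and because $\sum_{s'} p^k_{s,s'} = 1$ with each $p^k_{s,s'} \ge 0$, the triangle inequality bounds $|Q(k) - Q'(k)| \le \gamma\|v_c - v'_c\|$ uniformly in $k$. Hence $|\max_{k\in A_s} Q(k) - \max_{k\in A_s} Q'(k)| \le \gamma\|v_c - v'_c\|$ for every realized set $A_s$.

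The second step is to take the expectation over $A_s$. By linearity of expectation the difference $T^*_c v_c(s) - T^*_c v'_c(s)$ equals $\E_{A_s}[\max_{k\in A_s} Q(k) - \max_{k\in A_s} Q'(k)]$, and since the integrand is bounded in absolute value by the constant $\gamma\|v_c - v'_c\|$ independent of $A_s$, the same bound survives the expectation, giving $|T^*_c v_c(s) - T^*_c v'_c(s)| \le \gamma\|v_c - v'_c\|$. As $s$ was arbitrary, taking the maximum over $s$ yields $\|T^*_c v_c - T^*_c v'_c\| \le \gamma\|v_c - v'_c\|$.

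I do not anticipate a genuine obstacle, as this is the familiar Bellman-contraction argument; the only point requiring minor care is that the expectation over action subsets $A_s$ does not disrupt the bound, which holds precisely because the per-realization bound $\gamma\|v_c - v'_c\|$ is uniform in $A_s$. The argument is identical in spirit under ADS, with the empirical average over sampled subsets playing the role of the expectation.
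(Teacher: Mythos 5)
Your proof is correct, but it takes a genuinely different route from the paper's. You argue directly on the compressed operator: non-expansiveness of the pointwise max, cancellation of rewards, row-stochasticity of $p^k_{s,\cdot}$ giving the uniform per-realization bound $\gamma\|v_c-v'_c\|$, and then the observation that this bound survives the expectation over $A_s$. The paper instead proceeds indirectly: it lifts the compressed value functions to the embedded MDP $\calM_e$ (writing $v_c = Ev_e$, $v'_c = Ev'_e$), invokes the fact that $\calM_e$ is a \emph{standard} MDP whose Bellman operator $T^*_e$ is already known to be a $\gamma$-contraction, and then transports that contraction through the expectation operator $E$ using the commutation identity $ET^*_e = T^*_c E$ of Lemma~\ref{lemma1}. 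Each approach has its merits: the paper's is shorter given the machinery it has already built, and it illustrates the general principle that $T^*_c$ inherits properties of $T^*_e$ via $E$; yours is self-contained, needs neither Lemma~\ref{lemma1} nor the embedded construction, and sidesteps a subtlety in the lifting argument --- since contraction of $T^*_e$ only yields the sup-norm bound $\gamma\|v_e - v'_e\|$ over \emph{embedded} states, the paper's route really requires choosing the lifting $v_e(s\circ A)=v_c(s)$ (constant in $A$) so that $\|v_e - v'_e\| = \|v_c - v'_c\|$, a point glossed over when the paper applies the contraction pointwise inside the expectation. Your uniform-in-$A_s$ bound makes exactly this issue vanish, which is why your final expectation step is airtight.
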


\begin{cor}
For any precision $\veps < 1$, the compressed value iteration 
algorithm converges to an $\veps$-approximation of the optimal value
function in $O(\log (L/\veps))$ iterations, where 
$L\leq [\max_{s,k} r^k_s ]/(1-\gamma)$
is an upper bound on $||V^\ast_e||$.
\end{cor}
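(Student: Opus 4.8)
The plan is to run the standard Banach-contraction argument, specializing the generic error constants to the quantities named in the statement. The only ingredients I need are already in hand: by \cref{lemma3}, $T^*_c$ is a $\gamma$-contraction in the sup-norm $||\cdot||$, and by \cref{lemma1} it has the unique fixed point $V^\ast_c = EV^\ast_e$. Beyond these two facts the argument makes no further use of the SAS structure.

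First I would set up the iteration bound. Writing $V^t = (T^*_c)^t V^0$ for the $t$-th iterate from an initial guess $V^0$, repeated application of the contraction inequality of \cref{lemma3} gives $||V^t - V^\ast_c|| \leq \gamma^t\, ||V^0 - V^\ast_c||$. Taking $V^0 = 0$ reduces the initial error to $||V^\ast_c||$, so it remains to bound this by $L$. Since $V^\ast_c = EV^\ast_e$ and $E$ is an expectation over action subsets, $EV_e(s) = \sum_A P_s(A) V_e(s\circ A)$ is a convex combination of the entries of $V_e$; hence $E$ is non-expansive in sup-norm, giving $||V^\ast_c|| \le ||V^\ast_e|| \le L$. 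The bound $||V^\ast_e|| \le [\max_{s,k} r^k_s]/(1-\gamma)$ itself is the usual geometric-series estimate: any policy's discounted return is at most $\sum_{t\ge 0}\gamma^t \max_{s,k} r^k_s = \max_{s,k} r^k_s /(1-\gamma)$.

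Combining these, $||V^t - V^\ast_c|| \le \gamma^t L$. To force the right-hand side below $\veps$ it suffices that $\gamma^t L \le \veps$, i.e.\ $t \ge \log(L/\veps)/\log(1/\gamma)$. Because $\gamma < 1$ is fixed, $1/\log(1/\gamma)$ is a constant, so $t = O(\log(L/\veps))$ iterations guarantee an $\veps$-approximation of $V^\ast_c$, which is exactly the claimed rate.

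I do not anticipate a genuine obstacle, since everything reduces to the contraction property already established in \cref{lemma3}. The one point that warrants a sentence of care is the non-expansiveness of the action-expectation operator $E$, which is what lets me transfer the embedded value bound $L$ (an upper bound on $||V^\ast_e||$) to the compressed fixed point $V^\ast_c$ that VI actually targets. Coupled with the per-iteration cost of \cref{obs:VIperiteration}, this iteration count is what upgrades the convergence statement to a bona fide polynomial-time approximation method.
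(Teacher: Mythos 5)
Your proof is correct and follows essentially the same route as the paper, which simply invokes the ``standard arguments'': geometric convergence from the $\gamma$-contraction of \cref{lemma3}, with the initial error bounded by $L$ via the usual $r_{\max}/(1-\gamma)$ geometric-series estimate. Your explicit note that $E$ is non-expansive in sup-norm (so the bound on $||V^\ast_e||$ transfers to the compressed fixed point $V^\ast_c = EV^\ast_e$) is a point the paper leaves implicit, but it is the same argument.
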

We provide an even stronger result next: VI, in fact,
converges to an \emph{optimal} solution in polynomial time.

\subsection{The Complexity of Value Iteration}
\label{sec:valueitercomplexity}

Given its polytime per-iteration complexity, to ensure VI 
is polytime, we must show that it converges to a value function
that induces an optimal policy in polynomially many iterations.
To do so, we exploit the compressed representation and adapt
the technique of \cite{tseng:ORLetters90}.

%

Assume, w.r.t.~the base MDP $\calM$, that the discount factor $\gamma$, rewards
$r^k_s$, and transition probabilities $p^k_{s,s'}$, are rational numbers
represented with a precision of $1/ \delta$ ($\delta$ is an integer).
Tseng shows that VI for a standard MDP is strongly polynomial, assuming
constant $\gamma$ and $\delta$, by proving that: (a) if the $t$'th value
function produced by VI satisfies
$$
||V^t - V^\ast|| < 
1/(2 \delta^{2n+2} n^n)
,
$$
then the policy induced by $V^t$ is optimal;
and (b) VI achieves this bound in polynomially many
iterations.

We derive a similar bound on the number of VI iterations
needed for convergence in an SAS-MDP,
using the same input parameters as in the base MDP, and applying
the same precision $\delta$ to the action availability probabilities.
We apply Tseng's result by exploiting the fact that: (a) the optimal
policy for the embedded MDP $\calM_e$ can be represented as a DL; (b) the
transition function for any DL policy can be expressed using an $n\times n$
matrix (we simply take expectations, see above); and (c) the
corresponding linear system can be expressed over the \emph{compressed}
rather than the embedded state space 
to determine $V_c^\ast$ (rather than $V_e^\ast$). 

Tseng's argument requires some adaptation to apply to
the compressed VI algorithm. 
We extend his precision assumption to account for our action availability
probabilities as well, ensuring $\rho^k_s$ is also represented up to precision of $1/\delta$. 

Since $\calM_c$ is an MDP, Tseng's result applies; but notice that
each entry of the transition matrix for any state's DL $\mu$, which serves
as an action in $\calM_c$, is a product of $m+1$ probabilities, each with precision
$1/\delta$. We have that $p^\mu_{s,s'}$ has precision of $1/\delta^{m+1}$.
Thus the required precision parameter for our MDP is at most
$\delta^{m+1}$. Plugging this into Tseng's bound, 
VI applied to $\calM_c$ must induce an optimal policy at the $t$'th iteration if
{\small
$$
||V^t - v^\ast|| < 1/(2({\delta^{(m+1)}})^{2n} n^n)
                   = 1/(2\delta^{(m+1)2n} n^n)~.
$$
}%
This in turn gives us a bound on the number of iterations of VI
needed to reach an optimal policy:
\begin{thm} 
    \label{thm:VIconvergence}
VI applied to $\calM_c$ converges to a value function whose greedy policy 
is optimal in $t^*$ iterations, where
{\small
$$
t^* \leq \log(2\delta^{2n(m+1)} n^n M) / \log(1/\gamma)
$$
}
\end{thm}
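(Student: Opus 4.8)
The plan is to combine the geometric convergence of compressed value iteration with the correctness threshold already derived above from Tseng's argument. By Lemma~\ref{lemma3}, the compressed Bellman operator $T^\ast_c$ is a $\gamma$-contraction with unique fixed point $V^\ast_c$, so from any initial iterate $V^0$ the error contracts geometrically, $\|V^t - V^\ast_c\| \le \gamma^t\,\|V^0 - V^\ast_c\|$. Initializing $V^0 = 0$, the initial error equals $\|V^\ast_c\|$; since $V^\ast_c = EV^\ast_e$ is an expectation of embedded values (Lemma~\ref{lemma1}), we have $\|V^\ast_c\| \le \|V^\ast_e\| \le L$, and under the model's normalization this is bounded by $M$. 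This is the source of the $M$ factor appearing inside the logarithm.

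First I would invoke the threshold established in the preceding paragraphs: after extending the precision assumption so that each DL-induced transition entry $p^\mu_{s,s'}$ is a product of $m+1$ probabilities of precision $1/\delta$ and hence has precision $1/\delta^{m+1}$, Tseng's result applied to $\calM_c$ (with effective precision parameter $\delta^{m+1}$) guarantees that any iterate with $\|V^t - V^\ast_c\| < 1/(2\delta^{2n(m+1)} n^n)$ induces a greedy policy that is optimal. Next I would require the contraction bound to cross this threshold, i.e.\ choose $t$ so that $\gamma^t\,\|V^0 - V^\ast_c\| < 1/(2\delta^{2n(m+1)} n^n)$. Taking logarithms and dividing by $\log(1/\gamma) > 0$ gives
$$
t > \frac{\log\!\left(2\delta^{2n(m+1)} n^n \,\|V^0 - V^\ast_c\|\right)}{\log(1/\gamma)},
$$
and substituting $\|V^0 - V^\ast_c\| \le M$ yields exactly the claimed bound on $t^\ast$.

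Most of the conceptual work --- checking that $\calM_c$ is a genuine MDP that fits Tseng's framework, and tracking how the per-entry precision degrades from $1/\delta$ to $1/\delta^{m+1}$ through the product form of the DL transition matrices --- has already been carried out before the statement, so the step remaining for the theorem itself is essentially the arithmetic of matching the geometric decay rate against the threshold. The one point requiring care, and the mild obstacle here, is the $M$ factor: one must confirm that $\|V^\ast_c\| \le M$ under the stated normalization of rewards, using $1 - \gamma \ge 1/\delta$ to control $L \le [\max_{s,k} r^k_s]/(1-\gamma)$. Since $\log M$ enters only as a lower-order additive term dominated by the $2n(m+1)\log\delta$ and $n\log n$ contributions, any polynomial bound on the initial error suffices, and the rest follows immediately from Lemma~\ref{lemma3} and the already-established optimality threshold.
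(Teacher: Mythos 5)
Your proposal is correct and follows essentially the same route as the paper: invoke the Tseng-style precision threshold $1/(2\delta^{2n(m+1)}n^n)$ established just before the theorem, and then use the $\gamma$-contraction of $T^*_c$ (Lemma~\ref{lemma3}) to count how many iterations the geometric error decay needs to cross that threshold. The one loose point---reading $M$ as a bound on the initial error $\|V^0 - V^\ast_c\|$---mirrors the paper's own equally informal step of substituting an upper bound on $\|V^\ast\|$ into the logarithm, so nothing essential is missing.
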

Combined with Obs.~\ref{obs:VIperiteration}, we have:
\begin{cor} 
    \label{cor:VIpolytime}
VI yields an optimal policy for the SAS-MDP corresponding to $\calM_c$
in polynomial time.
\end{cor}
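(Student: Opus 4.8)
The plan is to combine the two ingredients already in hand---the polynomial per-iteration cost of compressed VI and the polynomial bound on the number of iterations needed to reach an optimal policy---and simply observe that their product is polynomial. The corollary is thus a bookkeeping combination of \cref{obs:VIperiteration} and \cref{thm:VIconvergence}, so the only real work is to confirm that each factor is genuinely polynomial in the input size.

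First I would invoke \cref{obs:VIperiteration}: a single application of the compressed Bellman operator $T^*_c$ under PDA costs $O(nm\log m)$ time. This accounts for computing every Q-value $Q^{t+1}(s,k)$, sorting them per state to obtain the associated decision list, and assembling $V^{t+1}$ via the telescoping product of availability probabilities in \cref{eq:pdabellman}; each of these is polynomial in the base-MDP parameters $n$ and $m$.

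Next I would bound the number of iterations using \cref{thm:VIconvergence}, which gives $t^* \leq \log(2\delta^{2n(m+1)} n^n M)/\log(1/\gamma)$. The key step is to check that this is polynomial in the encoding length of the instance. Expanding the numerator as $\log 2 + 2n(m+1)\log\delta + n\log n + \log M$ and using $M = nm$, every summand is polynomial in $n$, $m$, and $\log\delta$, with the leading contribution $2n(m+1)\log\delta$. Treating the discount factor $\gamma$ as a fixed constant renders $1/\log(1/\gamma)$ a constant, so $t^* = O(nm\log\delta + n\log n)$ iterations suffice.

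Finally, multiplying the per-iteration cost by the iteration count bounds the total time to produce $V^{t^*}$ by $O(nm\log m)\cdot O(nm\log\delta) = O(n^2m^2\log m\log\delta)$, which is polynomial; one additional sort then extracts the greedy decision-list policy, which is optimal by \cref{thm:VIconvergence}. I do not expect a substantive obstacle, since the heavy lifting was done in establishing \cref{thm:VIconvergence}. The one point demanding care is confirming that the exponents inside the logarithm defining $t^*$ collapse to a polynomial once the logarithm is applied, and in particular that $\delta$ enters only through $\log\delta$---so the bound is polynomial in the bit-length of the rational encoding rather than in $\delta$ itself.
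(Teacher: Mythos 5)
Your proposal is correct and follows exactly the paper's route: the corollary is obtained by combining the per-iteration bound of Observation~\ref{obs:VIperiteration} with the iteration bound of Theorem~\ref{thm:VIconvergence}, noting that the latter is polynomial in $n$, $m$, and $\log\delta$ for fixed $\gamma$. Your additional bookkeeping---expanding the logarithm and confirming that $\delta$ enters only through its bit-length---is precisely the implicit content of the paper's one-line derivation.
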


Under ADS, VI merely approximates
the optimal policy. In fact, one cannot compute an exact
optimal policy without observing the entire support of the availability
distributions (requiring exponential sample size).

\section{Policy Iteration in the Compressed MDP}
\label{sec:policyiter}

We now outline a policy iteration (PI) algorithm.

\subsection{Policy Iteration}
\label{sec:policyiteralg}

The concise DL form of optimal policies can be exploited in
PI as well.
Indeed, \emph{the greedy policy $\pi^V$ with respect to any value function $V$
in the compressed space} is representable as a DL. 
Thus the policy improvement
step of PI can be executed using the same independent 
evaluation of action Q-values and sorting as used in VI above:
\begin{small}
\begin{gather*}
Q^V(s,k) = r(s,k) + \gamma \sum_{s'} p^k_{s, s'} V(s')
,\\
Q^V(s,A_s) \!=\! \max_{k\in A_s} Q^V(s,k) 
    \;\textrm{, and } \; 
\pi^V(s,A_s) \!=\! \arg\max_{k\in A_s} Q^V(s,k).
\end{gather*}
\end{small}

The DL policy form can also be exploited in the policy evaluation 
phase of PI.
The tractability of policy evaluation requires a tractable representation
of the action availability probabilities, which
PDA provides, leading to the following PI method that exploits PDA:
\begin{enumerate}\denselist
  \item Initialize an arbitrary policy $\pi$ in decision list form.
  \item Evaluate $\pi$ by solving the following linear system over variables
    $V^\pi(s), \forall s\in S$: (Note: We use $Q^{\pi}(s,k)$ to represent the
    relevant linear expression over $V^\pi$.)
    \begin{small}
    \begin{align*}
      V^\pi(s) 
         &= \sum_{i=1}^{n} \,[\prod_{j=1}^{i-1} (1-\rho_{(j)})]\, \rho_{(i)} Q^{\pi}(s,k_{(i)})
         \label{eq:pda_expect_val}
    \end{align*}
    \end{small}
  \item Let $\pi'$ denote the greedy policy w.r.t.\ $V^\pi$, which can be 
    expressed in DL form for each $s$ by sorting Q-values 
    $Q^{\pi}(s,k)$ as above
    (with standard tie-breaking rules).
    If $\pi'(s) = \pi(s)$, terminate;
    otherwise replace $\pi$ with $\pi'$ and repeat (Steps 2 and 3).
\end{enumerate}

Under ADS, PI can use the approximate Bellman operator, giving
an approximately optimal policy.

\subsection{The Complexity of Policy Iteration}
\label{sec:policyitercomplexity}

The per-iteration complexity of PI in $\calM_c$ is 
polynomial: as in standard PI, policy evaluation solves
an $n\times n$ linear system (naively, $O(n^3)$) plus
the additional overhead (linear in $M$) to compute the compounded
availability probabilities; 
and policy improvement requires $O(mn^2)$ computation of action Q-values, 
plus $O(nm\log m)$ overhead for sorting Q-values (to produce
improving DLs for all states).

An optimal policy is reached in a number of iterations
no greater than that required by VI, since:
(a) the sequence of value functions for the policies 
generated by PI contracts at least as quickly as the value
functions generated by VI (see, e.g., 
\cite{meister:ORSpektrum86,hansen:jacm13}); (b) our precision argument
for VI ensures that the greedy policy extracted at that
point will be optimal; and (c) once PI finds an optimal policy,
it will terminate (with one extra iteration). Hence, PI
is polytime (assuming a fixed discount $\gamma<1$).
\begin{thm} 
    \label{thm:PIpolytime}
    PI yields
an optimal policy for
    the 
SAS-MDP 
corresponding to $\calM_c$
in polynomial time.
\end{thm}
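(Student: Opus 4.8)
The plan is to combine two facts: each PI iteration runs in polynomial time, and the number of iterations is bounded by (essentially) the VI iteration count of \cref{thm:VIconvergence}. For the per-iteration cost I would lean entirely on the decision-list structure together with PDA. Policy improvement (Step~3) computes the $nm$ values $Q^\pi(s,k)$ in $O(mn^2)$ time and sorts the $m$ Q-values at each state in $O(nm\log m)$ time, which simultaneously yields the improving DL at every state. Policy evaluation (Step~2) is the only place where the compressed action space could be dangerous, since a ``state policy'' ranges a priori over exponentially many maps $2^B\to B$; but because $\pi$ is maintained in DL form, each transition row collapses under PDA to the closed form behind \cref{eq:pdabellman}, with the compounded weights $\left[\prod_{j=1}^{i-1}(1-\rho_{(j)})\right]\rho_{(i)}$ accumulated in linear time by caching running products. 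Thus $P^\pi$ is assembled in $O(M)$ time and $V^\pi=(I-\gamma P^\pi)^{-1}r^\pi$ is solved in $O(n^3)$, so each iteration is polynomial.

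For the iteration count I would avoid re-deriving the precision bound and instead reduce to the VI analysis. Since $\calM_c$ is a genuine MDP (\cref{lemma1,lemma2}) whose optimum is attained by a DL policy (the decision-list theorem), and $T^*_c$ is a $\gamma$-contraction (\cref{lemma3}), the classical domination of PI over VI applies: started from a common initial value function, PI's iterates are at least as close to $V^\ast_c$ as VI's, i.e.\ $\|V^{\pi_t}-V^\ast_c\|\le\|(T^*_c)^t V^0-V^\ast_c\|\le\gamma^t\|V^0-V^\ast_c\|$ (see \cite{meister:ORSpektrum86,hansen:jacm13}). The precision threshold derived for VI in the compressed space therefore transfers verbatim: once $\|V^{\pi_t}-V^\ast_c\|<1/(2\delta^{(m+1)2n}n^n)$, the greedy DL policy with respect to $V^{\pi_t}$ is optimal. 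Because VI meets this threshold by iteration $t^\ast=\log(2\delta^{2n(m+1)}n^n M)/\log(1/\gamma)$, so does PI, and hence the policy $\pi_{t^\ast+1}$ extracted in Step~3 is optimal.

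It remains to handle termination. The greedy DL policy with respect to an optimal compressed value function is itself optimal, so re-evaluating it returns $V^\ast_c$ and Step~3 reproduces the same policy under the fixed tie-breaking rule, firing the stopping test. Thus PI halts after at most $t^\ast+1$ iterations, and since $t^\ast$ depends on the precision parameter only through $\log\delta$ and is therefore polynomial in the bit-size of $\calM$ for any fixed discount $\gamma<1$, the polynomial per-iteration cost gives the result. The step I expect to be the main obstacle is justifying that the PI-dominates-VI comparison survives the passage to $\calM_c$: one must confirm that the exponential action set of the compressed MDP does not break the standard domination inequality, which it does not precisely because \cref{lemma1,lemma2} and the decision-list theorem certify that $\calM_c$ is a bona fide MDP with a DL-representable optimum on which the usual contraction and monotonicity arguments run unchanged.
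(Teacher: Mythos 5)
Your proposal is correct and follows essentially the same route as the paper: the identical per-iteration accounting (policy evaluation as an $n\times n$ linear system with $O(M)$ overhead for the compounded availability probabilities under PDA, plus $O(mn^2)$ Q-value computation and $O(nm\log m)$ sorting for improvement), combined with the standard PI-dominates-VI contraction argument (same references) to inherit the precision threshold of \cref{thm:VIconvergence}, and termination one iteration after optimality is reached. Nothing further is needed.
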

In the longer version of the paper \cite{sasmdps_full:arxiv18},
we adapt more direct proof techniques
\scite{yinyuye:mathOR11,hansen:jacm13} to derive polynomial-time
convergence of PI for SAS-MDPs under additional assumptions.
Concretely, for a policy $\mu$ and actions $k_1,k_2$,
let $\eta_\mu(s,k_1,k_2)$ be the probability, over action sets,
that at state $s$, the optimal 
$\mu^\star$ selects $k_1$ and
$\mu$ selects $k_2$.
Let $q > 0$ be such that 
$\eta_\mu(s,k_1,k_2) \ge q$
whenever
$\eta_\mu(s,k_1,k_2) > 0$.
We show:
\begin{thm}
\label{thm:policycomp}
The number of iterations it takes policy iteration to converge is no more than
{
\small
\[
O\left( \frac{nm^2}{1-\gamma} \log \frac{m}{1-\gamma} \log \frac{e}{q} \right)~.
\]
}
\end{thm}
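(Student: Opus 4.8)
The plan is to adapt the ``crossover'' analysis of Ye \scite{yinyuye:mathOR11} and Hansen et al.\ \scite{hansen:jacm13} for discounted MDPs to the compressed MDP $\calM_c$. Since $\calM_c$ is a genuine finite MDP (Section~\ref{sec:compressed}) and the scheme of Section~\ref{sec:policyiteralg} is ordinary Howard PI on it, the iterates $\mu_0,\mu_1,\ldots$ produce monotonically improving values $V^{\mu_0}\le V^{\mu_1}\le\cdots$, so the gap $V^\ast_c-V^{\mu_t}$ is componentwise non-increasing and, by \cref{lemma3}, contracts with modulus $\gamma$. By the decision-list optimality theorem (Theorem~1), both the iterates and a fixed optimal policy $\mu^\star$ may be taken to be DLs; hence at each state $s$ the disagreement between $\mu_t$ and $\mu^\star$ is captured entirely by the base-action triples $(s,k_1,k_2)$ with $\eta_{\mu_t}(s,k_1,k_2)>0$, of which there are at most $nm^2$ in total.

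First I would establish an occupancy-measure decomposition of the value gap. For any DL policy $\mu$, the gap $V^\ast_c-V^\mu$ equals a discounted sum of the one-step advantages of $\mu^\star$ over $\mu$, taken under the occupancy measure induced by $\mu^\star$. Because the transitions and rewards of $\calM_c$ are \emph{expectations} over action sets (Eqs.~\ref{eq:compressedBellmanOp}--\ref{eq:compressedPolicyOp}), each such advantage decomposes \emph{additively} over the disagreement triples, the contribution of $(s,k_1,k_2)$ being $\eta_\mu(s,k_1,k_2)$ multiplied by the one-step $Q^\mu$-advantage of $k_1$ over $k_2$. Thus the total gap is a non-negative combination of at most $nm^2$ ``weight~$\times$~advantage'' terms, and some single triple carries at least a $1/(nm^2)$ fraction of it.

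Next I would run the crossover step. As in Ye's argument, once the value gap has contracted enough that a bottleneck triple's term can no longer be recovered, the corresponding base-action disagreement is \emph{permanently} removed from the DL at $s$: monotonicity of PI forbids reselecting an action whose switching advantage has grown large relative to the residual gap. The bound $\eta_\mu(s,k_1,k_2)\ge q$ converts a threshold on the weighted term into a threshold on the gap itself, contributing the factor $\log(e/q)$, while $\gamma$-contraction turns the required shrinkage (between the largest possible gap and the smallest positive $Q$-advantage at the working precision) into the $O\!\left(\frac{1}{1-\gamma}\log\frac{m}{1-\gamma}\right)$ per-triple term. Since each of the $\le nm^2$ triples is eliminated at most once, multiplying yields the claimed bound.

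The hard part will be the exponential action space of $\calM_c$: Ye's bound is stated in terms of the number of state-action pairs, which here is the number of DLs and is exponential, so it cannot be invoked verbatim. The crux is to do all accounting at the granularity of base-action disagreement triples $(s,k_1,k_2)$ rather than whole DLs, and to prove that the elimination of a triple is irreversible even though other positions in the DL at $s$ may keep changing. This is precisely where $q$ is needed: it lower-bounds the mass $\eta_\mu$ that any surviving disagreement must carry, so that control of the \emph{weighted} advantage yields genuine control of the number of surviving triples. Verifying that the additive gap decomposition respects this granularity, and pinning down the exact way the two logarithmic factors $\log\frac{m}{1-\gamma}$ and $\log(e/q)$ combine in the per-triple elimination time, is the delicate portion of the argument.
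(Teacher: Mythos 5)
Your overall architecture---adapting the Ye/Hansen-et-al.\ crossover analysis to $\calM_c$, doing all accounting over base-action disagreement triples $(s,k_1,k_2)$ rather than over the exponentially many DL actions, and using $\eta_\mu \ge q$ to convert weighted terms into genuine ones---is exactly the route the paper takes (the paper defers details to its longer version, but its definitions of $\eta_\mu$ and $q$ and its citations to Ye and Hansen et al.\ are precisely this plan). However, there is a concrete flaw in the step you build everything on: you decompose $V^*_c - V^{\mu_t}_c$ as $\mu^\star$'s occupancy measure applied to advantages measured in $Q^{\mu_t}$. That reference function changes at every iteration, so a triple whose weighted $Q^{\mu_t}$-advantage is large at iteration $t$ says nothing about iterations $t+k$; the ``permanent removal'' claim---the crux of any crossover argument---does not follow, and ``monotonicity of PI'' cannot rescue it, since monotonicity constrains values, not which base action a later DL selects from a given realized set. (A smaller issue: in your decomposition the individual triple terms can be negative, because $\mu^\star$'s choice need not be greedy w.r.t.\ $Q^{\mu_t}$, so your ``non-negative combination'' assertion is false as stated; the bottleneck-selection step itself survives, since a maximum still dominates an average.)

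The fix is to use the dual performance-difference identity, which is what makes elimination permanent in the standard proofs: $V^*_c - V^{\mu}_c = (I - \gamma P^{\mu})^{-1}\bigl(V^*_c - T^{\mu}_c V^*_c\bigr)$, whose per-state loss decomposes as $V^*_c(s) - T^{\mu}_c V^*_c(s) = \sum_{k_1,k_2} \eta_\mu(s,k_1,k_2)\,\bigl(Q^*(s,k_1) - Q^*(s,k_2)\bigr)$ with every summand non-negative (the optimal DL is greedy w.r.t.\ $Q^*$) and, crucially, with $\Delta^*(s,k_1,k_2) = Q^*(s,k_1)-Q^*(s,k_2)$ independent of the iteration. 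Then: since $\|V^*_c - V^{\mu_t}_c\|_\infty \le \frac{1}{1-\gamma}\max_s \bigl(V^*_c(s) - T^{\mu_t}_c V^*_c(s)\bigr)$ and each state has at most $m^2$ active triples, some triple in use at iteration $t$ has $\Delta^*(s,k_1,k_2) \ge \frac{1-\gamma}{m^2}\|V^*_c - V^{\mu_t}_c\|_\infty$; any later iterate that still uses this triple has $\eta \ge q$ there, hence gap at least $q\,\Delta^*(s,k_1,k_2)$ at $s$, while contraction forces its gap below $\gamma^k \|V^*_c - V^{\mu_t}_c\|_\infty$---a contradiction once $\gamma^k < q(1-\gamma)/m^2$, i.e.\ after $O\bigl(\frac{1}{1-\gamma}(\log\frac{m}{1-\gamma} + \log\frac{1}{q})\bigr)$ iterations, which is within the stated per-triple budget since the sum of the two logarithms is dominated by their product when both exceed one. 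Multiplying by the at most $nm^2$ triples, each eliminated once and permanently, gives the theorem. With this substitution your outline goes through; as written, it does not.
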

Under PDA, the theorem implies \emph{strongly-polynomial} convergence of PI
if each action is available with constant probability. In this case, for
any $\mu$, $k_i$, $k_j$, and $s$, we have $\eta_\mu(s,k_i,k_j) \ge \rho_s^{k_i} \cdot
\rho_s^{k_j} = \Omega(1)$, which in turn implies that we can take $q = \Omega(1)$ in the bound above.


%
%

\section{Linear Programming in the Compressed MDP}
\label{sec:lp}

An alternative model-based approach is linear programming
(LP). The primal formulation for the embedded MDP $\calM_e$ is straightforward
(since it is a standard MDP), but requires exponentially many variables
(one per embedded state) and constraints (one per embedded state, base action
pair).
%

A (nonlinear)
primal formulation for the compressed MDP $\calM_c$ reduces the
number of variables to $|S|$:
\begin{small}
\begin{align}
  \min_{\mathbf{v}}\,\sum_{s\in S}\nolimits\alpha_s v_s,\quad\textrm{s.t. }\,v_s \geq \E\nolimits_{A_s}\max_{k\in A_s} Q(s,k)\quad \forall s.
      \label{eq:compressedPrimalConstr}
\end{align}
\end{small}%
Here $\alpha$ is an arbitrary, positive state-weighting,
over the embedded states corresponding to each base state
and 
$$Q(s,k) = r_s^k + \sum_{s' \in S} p^k_{s,s'} v_{s'}$$
abbreviates the linear expression of the action-value backup
at the state and action in question 
w.r.t.\ the value variables $v_s$.
This program is valid given the definition of $\calM_c$ and
the fact that a weighting over embedded states corresponds 
to a weighting over base states by taking expectations.
Unfortunately, this formulation is non-linear,
due to the max term in each constraint.
And while it has only
$|S|$ variables, it has factorially many constraints;
moreover, the constraints themselves are not compact
due to the presence of the expectation in each constraint.

PDA can be used to render this formulation tractable.
Let $\sigma$ denote an arbitrary (inverse) permutation of the action set 
(so $\sigma(i)=j$
means that action $j$ is ranked in position $i$). As above,
the optimal policy at base state $s$ w.r.t.\ a
Q-function is expressible as a DL
( with actions sorted by
Q-values) and its expected value given by the expression derived below. 
Specifically, if
$\sigma$ reflects the relative ranking of the (optimal) Q-values of the 
actions at some
fixed state $s$, then $V(s) = Q(s,\sigma(1))$
with probability $\rho_{\sigma(1)}$, i.e., the
probability that $\sigma(1)$ occurs in $A_s$. Similarly, 
$V(s) = Q(s,\sigma(2))$ with probability
$(1-\rho_{\sigma(1)})\rho_{\sigma(2)}$, and so on.
We define the Q-value of a DL $\sigma$ as follows:
\begin{small}
\begin{align}
Q^V_s(\sigma) = \sum_{i=1}^{n} 
 \,[\prod_{j=1}^{i-1} (1-\rho_{\sigma(j)})]\, 
    \rho_{\sigma(i)} Q^V(s,\sigma(i)).
\end{align}
\end{small}%
Thus, for any fixed action permutation $\sigma$,
the constraint that
$v_s$ at least matches the expectation 
of the maximum action's Q-value is linear.
Hence, the program can be recast as an LP by enumerating
action permutations for each base state, replacing the constraints in
 Eq.~(\ref{eq:compressedPrimalConstr}) as follows:
\begin{small}
\begin{align}
   v_s \geq Q^V_s(\sigma) \quad \forall s\in S, \forall \sigma \in \Sigma.
      \label{eq:compressedPrimalConstr2}
\end{align}
\end{small}
%

The constraints in this LP are now each compactly represented,
but it still has factorially many constraints.
Despite this, it can be solved in polynomial time.
First, we observe that the LP is well-suited to constraint generation.
Given a relaxed LP with a subset of constraints, 
a greedy algorithm that simply sorts actions by Q-value 
to form a permutation can be used to find the maximally violated 
constraint at any state. Thus we have a practical constraint generation
algorithm for this LP since (maximally)
violated constraints can be found in polynomial time.

More importantly from a theoretical standpoint,
%
the constraint generation algorithm can be used
as a separation oracle within an ellipsoid method for this LP.
This directly yields an exact, (weakly) 
polynomial time algorithm for
this LP~\cite{GroetschelLovaszSchrijver1988}.


\section{Empirical Illustration}
\label{sec:empirical}

We now provide a somewhat more elaborate empirical demonstration of the 
effects of stochastic action availability.
Consider an MDP that corresponds to a routing problem on a real-world road 
network (Fig.~\ref{fig:routing}) in the San Francisco Bay Area.
The shortest path between the source and destination locations is sought.
The dashed edge in Fig.~\ref{fig:routing} represents a bridge,
available only with probability $p$, while all other edges correspond to
action choices available with probability $0.5$.
At each node, a no-op action (waiting) is available at constant cost;
otherwise the edge costs are the geodesic lengths of the corresponding roads
on the map. 
The optimal policies for different choices $p=0.1, 0.2$ and $0.4$ are
depicted in Fig.~\ref{fig:routing}, 
where line thickness and color 
indicate traversal probabilities under the corresponding optimal policies.
It can be observed that lower values of $p$ lead to policies with more 
redundancy. 
Fig.~\ref{fig:obliv} investigates the effect of solving the routing problem obliviously to
the stochastic action availability (assuming actions are fully available). The SAS-optimal
policy allows graceful scaling of the expected travel time from source to destination as
bridge availability decreases. 
Finalluy, the effects of violating the PDA assumption are investigated in the long version
of this work~\cite{TODO}.
\begin{figure}[t!]
    \centering
        \includegraphics[width=0.5\columnwidth]{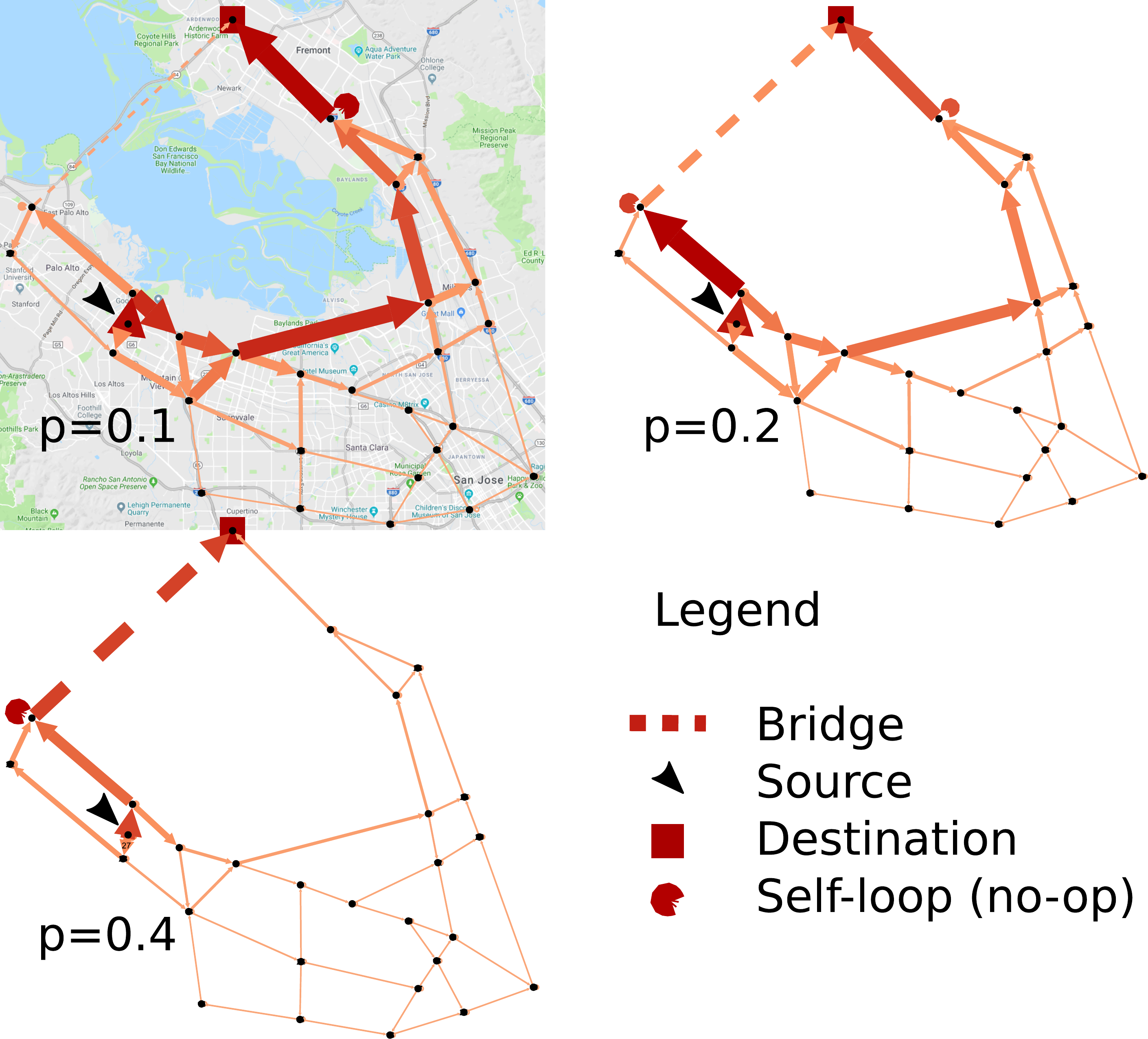}
    \caption{Stochastic action MDPs applied to routing.\label{fig:routing}\vspace{-5pt}}
\end{figure}
\vspace{-5pt}


\begin{figure}
  \begin{minipage}[c]{0.35\columnwidth}
    \includegraphics[width=\columnwidth]{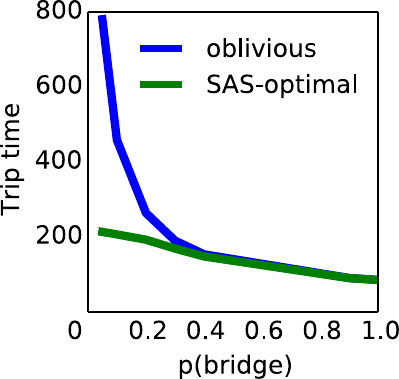}
  \end{minipage}\hfill
  \begin{minipage}[c]{0.6\columnwidth}
    \caption{
    Expected trip time from source to destination under the SAS-optimal policy vs. under the oblivious optimal policy (the MDP solved as if actions are fully available) as a function of bridge availability.}\label{fig:obliv}   
  \end{minipage}
\end{figure}

\vspace{-5pt}
\section{Concluding Remarks}
\label{sec:conclude}

We have developed a new MDP model, \emph{SAS-MDPs}, that extends the usual
finite-action MDP model by allowing the set of available actions
to vary stochastically.
This captures an
important use case that arises in many practical applications
(e.g., online advertising, recommender systems).
We have shown that embedding action sets in the state gives
a standard MDP, supporting tractable analysis at
the cost of an exponential blow-up in state space size. Despite this,
we demonstrated that (optimal and greedy) policies
have a useful decision list structure. We showed how
this DL format can be exploited
to construct
tractable Q-learning, value and policy iteration, and linear programming
algorithms.

While our work offers firm foundations for stochastic action
sets, most practical applications will not
use the algorithms described here explicitly. For example, in RL, we
generally use function approximators for generalization and scalability in
large state/action problems. We have successfully applied Q-learning 
using DNN function approximators (i.e.,~DQN)
using sampled/logged available actions in ads and recommendations
domains as described in Sec.~\ref{sec:qlearn}. This has allowed us
to apply SAS-Q-learning
to problems of significant, commercially viable scale.
Model-based methods such as VI, PI, and LP also require suitable
(e.g., factored) representations of MDPs and
structured implementations of our algorithms
that exploit these representations. For instance,
extensions of approximate linear programming or structured dynamic 
programming to incorporate stochastic action sets would be extremely
valuable.

Other important questions include
developing a polynomial-\emph{sized} direct LP formulation; and deriving
sample-complexity results for RL algorithms
like Q-learning is also of particular interest, especially as it pertains 
to the sampling of the action distribution. Finally, we are quite interested
in relaxing the strong assumptions embodied in the PDA model---of particular
interest is the extension of our algorithms to less extreme forms of
action availability independence, for example, as represented using consise
graphical models (e.g., Bayes nets).

\vskip 2mm
\noindent
\textbf{Acknowledgments:} Thanks to the reviewers for their helpful
suggestions.

\bibliographystyle{plain}
\bibliography{long,standard}



\appendix

\onecolumn

\section{Proofs}

\subsection{Proof of \cref{lemma1}}

\begin{align*}
ET^eV_e(s) & = \E_{A\subseteq B} T^eV_e(s\circ A)\\
           & = \E_{A\subseteq B} 
                 \max_{k\in A} r^k_s +\gamma\sum_{s'\circ A'} 
                              p^k_{s\circ A,s'\circ A'} V_e(s'\circ A') \\
           & = \E_{A\subseteq B} 
                 \max_{k\in A} r^k_s +\gamma\sum_{s'} p^k_{s,s'}
                      \E_{A'\subseteq B} V_e(s'\circ A') \\
           & = \E_{A\subseteq B} 
                 \max_{k\in A} r^k_s +\gamma\sum_{s'} p^k_{s,s'} EV^e(s') \\
           & = T^cEV^e(s')
.
\end{align*}

\subsection{Proof of \cref{lemma2}}

It is easy to compute the expected value of each action in $k\in A$ using a
one-step backup of $V^\ast_c$.
The action with maximum value is $\pi^\ast_e(s\circ A)$ and its backed-up 
expected value is $V^\ast_e(s\circ A)$.

\subsection{Proof of \cref{lemma3}}

Since $\calM_e$ is a standard MDP, we know $\calM_e$ is a contraction with
modulus $\gamma$.
Let $v_c = Ev_e$ and $v_c' = Ev'_e$ be compressed value functions 
that each correspond to some (arbitrary) embedded value function.
Then for any $s$ we have:
\begin{small}
\begin{align*}
|T^cv_c(s) - T^c v'_c(s)|
    & = |\E_{A\subseteq B} T^ev_e(s\circ A)
            - \E_{A\subseteq B} T^ev'_e(s\circ A)| \\
    & = |\E_{A\subseteq B} (T^ev_e(s\circ A) - T^ev'_e(s\circ A)| \\
    & \leq |\E_{A\subseteq B} \gamma (v_e(s\circ A) - v'_e(s\circ A)|\\
    & = \gamma |\E_{A\subseteq B} v_e(s\circ A) 
                   - \E_{A\subseteq B} v'_e(s\circ A)|\\
    & = \gamma |v_c(s) - v'_c(s)|
.
\end{align*}
\end{small}%

\section{Guarantees on the ADS Assumption}
\label{sec:efficientimp}

Let $\mu(s)(A)$ be the top action according to the DL $\mu(s)$.
We have the following theorem, which is a direct application of Hoeffding's concentration inequality along with a union bound.
\begin{lemma}
\label{lemma:concentration}
Let $Q^\star$ be optimal for the MDP. Suppose that for each state $s$ we sample $A_s^{(1)},\ldots,A_s^{(t)}$. If
\[
m = \Omega\left( \left(\frac{\gamma}{1-\gamma}\right)^2 \frac{|A| + \log(|S|/\delta)}{\epsilon^2} \right)~,
\]
then with probability $1-\delta$, for each DL policy $\mu$ we have
\[
\left| \E_{A \subseteq B} Q^\mu(s, \mu(s)(A)) - \frac{1}{T} \sum_{t=1}^T Q^\mu(s, \mu(s)(A_s^{(t)})) \right| \le \frac{\epsilon (1 - \gamma)}{2\gamma}~.
\]
\end{lemma}

We can use the lemma above to approximate the $Q$ function by the one corresponding to an approximate-MDP, resulted by the sub-sampling of the action subsets.

\begin{lemma}
\label{lemma:qapprox}
Let $\widehat{Q}^\mu$ be the $Q$ function corresponding to policy $\mu$ for the approximate-MDP. Then, for any policy $\mu$, state $s$ and action $a$,
\[
\left|Q^\mu(s,a) - \widehat{Q}^\mu(s,a) \right| \le \frac{\epsilon}{2}~.
\]
\end{lemma}

\begin{proof}
We will show one direction of the proof, and the other will follow by a symmetric argument. Let us unfold the $Q$ function by successive applications of \cref{lemma:concentration},
\begin{align*}
Q^\mu(s,a) &= \E_{s'|s,a} \left[r(s,a) + \gamma \E_{A \subseteq B} Q^\mu(s',\mu(s')(A))\right] \\
&\le \E_{s'|s,a} \left[r(s,a) + \gamma \cdot \frac{1}{T} \sum_{t=1}^T \left( Q^\mu(s',\mu(s')(A_{s'}^{(t)})) + \frac{\epsilon(1-\gamma)}{2\gamma} \right) \right] \\
&= \E_{s'|s,a} \left[r(s,a) + \frac{\gamma}{T} \sum_{t=1}^T Q^\mu(s',\mu(s')(A_{s'}^{(t)})) \right] + \frac{\epsilon(1-\gamma)}{2}~.
\end{align*}
Repeating the argument above with respect to $Q^\mu(s',\mu(s')(A_{s'}^{(t)}))$ and continuing so recursively, we obtain
\[
Q^\mu(s,a) \le \widehat{Q}^\mu(s,a) + \sum_{t=0}^\infty \gamma^t \frac{\epsilon(1-\gamma)}{2} = \widehat{Q}^\mu(s,a) + \frac{\epsilon}{2}~.
\]
\end{proof}

We can now state our theorem.

\begin{thm}
Let $\hat{\mu}$ be the optimal policy for the approximate-MDP. 
For each state $s$ and action $a$, $Q^{\hat{\mu}}(s,a) \ge Q^\star(s,a) - \epsilon$.
\end{thm}

\begin{proof}
We have,
\begin{align*}
Q^\star(s,a) &\le \widehat{Q}^{\mu^\star}(s,a) + \frac{\epsilon}{2} & &\mbox{(\cref{lemma:qapprox})} \\
&\le \widehat{Q}^{\hat{\mu}}(s,a) + \frac{\epsilon}{2} & &\mbox{($\hat{\mu}$ optimal for $\widehat{Q}$)} \\
&\le Q^{\hat{\mu}}(s,a) + \epsilon~. & &\mbox{(\cref{lemma:qapprox})}
\end{align*}
\end{proof}

\section{Non-stochastic Action Availability}

Suppose that the action subsets are chosen by an adversarial process, yet are known in advance. In this case the optimal policy might neither be stationary nor a decision list. This is shown in the following example.

\begin{center}
\begin{tikzpicture}
\node[draw, circle] at (0,0) {$1$};
\node[draw, circle] at (2,0) {$2$};
\node[draw, circle] at (4,0) {$3$};
\draw[->, >=latex] (-0.3,0.1) arc (0:330:0.3cm) node[left=20, above] {$a$};
\draw[->, >=latex] (0.3,0) -- (1.7,0) node[above, pos=0.5] {$b$};
\draw[->, >=latex] (2.3,0) -- (3.7,0);
\draw[->, >=latex] (4,-0.3) arc (-20:-160:2.1);
\draw[->, >=latex] (0.1,0.3) arc (160:20:2) { node[above=15,left=100] {$c$} };
\end{tikzpicture}
\end{center}

Consider the MDP drawn above, which has three states. When the agent is at state 1, she has the choice of playing actions $a$, $b$ or $c$. Actions $b$ and $c$ are always available, and $b$ generates a slightly higher reward than $c$\footnote{Compared to the discounting factor.}. Action $a$ is available once in a while, and its reward is much higher than both $b$ and $c$.

When the agent visits state 1 and action $a$ is unavailable, she will usually prefer playing action $b$. However, should the agent know that action $a$ will become available in the next turn, she will play action $c$. This is even though in both cases the set of available actions is the same!

\section{The Complexity of Value Iteration}

Given its polytime per-iteration complexity, to ensure VI 
is polytime, we must show that it converges to a value function
that induces an optimal policy in a polynomially many iterations.
To do so, we exploit the compressed representation and adapt
the technique of \cite{tseng:ORLetters90}.

First we need some assumptions and definitions w.r.t.\ the base MDP $\calM$:
\begin{itemize}\denselist
\item Assume (w.l.o.g.) that all rewards/costs are integers.
\item Let $\delta$ be the smallest integer s.t.\ $\delta \gamma$ and
	$\delta p^k_{s,s'}$ (for all actions, states) are integer and
	$\delta > r^k_s $ for all states, actions.
This is the precision needed to represent the base:
each parameter requires $\log \delta$ bits.
\end{itemize}

Tseng shows that VI for a standard MDP is weakly polynomial (assuming
a constant discount factor $\gamma$), by proving that:
(a) if the $t$th value function produced by VI satisfies
$$
||V^t - V^\ast|| < 
1/(2 \delta^{2n+2} n^n)
,
$$
then the policy induced by $V^t$ is optimal;
and (b) VI achieves this bound in polynomially many
iterations.

Tseng's proof involves several steps:
\begin{itemize}\denselist
\item 
A simple argument based on the precision of the parameters shows that
for all $s$, $V^\ast(s) = w_s/(\delta^{2n}n^n)$ for some integer $w_s$:
Let $\mu$ be any policy.
Then the induced transition matrix $P^\mu$ and the immediate reward vector
$R^\mu$ are such that $Z=\delta^2(I - \gamma P^\mu)$ and 
$\delta^2 R^\mu$ are integral (and the entries of $Z$ are less than $\delta^2$).
Then for the optimal policy $\mu$,
by Cramer's rule we have that $V^\ast(s) = \det Z_s / \det Z$ (where
$Z_s$ is $Z$ with $R^\mu$ replacing column $s$).
By Hadamard's inequality, the determinant of $Z$ is bounded 
by $\delta^{2n} n^{n/2}$,
and since the determinant of $Z_s$ must also be integral,
the stated fact follows. 

\item
An
action elimination argument based on the precision of the solution 
shows that the action backup of any nonoptimal action w.r.t.\ $V^\ast$ must 
differ from that of the optimal action by at least $1/(\delta^{2n+2}n^n)$:
Suppose action $k$ is not optimal at $s$, hence
    $Q^*(s,k) = T^k V^\ast(s) < T^\mu V^\ast(s)$ (where $\mu$ is the
    optimal policy). Then
     \begin{align*}
      T^kV^\ast(s)
         & = \gamma \sum_{s'} p^k_{ss'} V^\ast(s) + r^k_s \\
         & = \gamma \sum_{s'} p^k_{ss'} w_s/(\delta^{2n}n^n) + r^k_s \\
         & = \frac{\delta^2 \gamma \sum_{s'} p^k_{ss'} w_s 
                 + \delta^{2n+2} n^{n/2}r^k_s}
                  {\delta^{2n+2}n^n}
.
     \end{align*}
Since the numerator is integral (and so is $w_s$),
$T^k V^\ast(s)$ and $T^\mu_c V^\ast(s)$ must differ by at least 
$1/(\delta^{2n}n^n)$.
Thus if $||V^t - v^\ast|| < 1/(2\delta^{2n}n^n)$ 
(where $V^t$ is $t$th iterate of VI),
some simple substitutions show that the policy induced by $V^t$ must be optimal.
\item
For any target precision $\veps$, Tseng uses standard arguments to show
  that after $t$ iterations, the error $||V^t - v^\ast|| < \veps$,
where $t = \lceil \log (||V^\ast||/(1-\gamma)\veps) / \log(1/\gamma)\rceil$.
We can plug in the usual upper bound $U$ on $V^\ast$,
which is $U = r_{\max} / (1-\gamma) = O(r_{\max})$ for a fixed
$\gamma$.
\item
Substituting the precision $1/(2\delta^{2n}n^n)$ for $\veps$
gives a polytime bound on required iterations:
$$
t^* \leq \log(2\delta^{2n}n^n U) / \log(1/\gamma)
.
$$
\end{itemize}

We derive a similar bound on the number of VI iterations
needed for convergence in an SAS-MDP,
using the same input parameters as in the base MDP, and applying the
the same precision $\delta$ to the action availability probabilities.
We apply Tseng's result by exploiting the fact that: (a) the optimal
policy for the embedded MDP $\calM_e$ can be represented as a DL; (b) the
transition function for any DL policy can be expressed using an $n\times n$
matrix (we simply take expectations, see above); and (c) the
corresponding linear system can be expressed over the \emph{compressed}
rather than the embedded state space 
to determine $V_c^\ast$ (rather than $V_e^\ast$). 

Tseng's argument requires some adaptation to apply to
the compressed VI algorithm.  
His definition of $\delta$ 
ensures that the terms in the linear system when multiplied by $\delta^2$
are integers, which in turn ensures the solution for $V^\ast$
has precision limited to $\delta^{2n}n^n$.
We extend his precision assumption to account for our action availability
probabilities as well, 
ensuring $\delta \rho^k_s$ is integral for all $s\in S, k\in B$. 

Since $\calM_c$ is an MDP, Tseng's result applies; but notice that
the transition matrix for any state's DL $\mu$, which serves
as an action in $\calM_c$, has entries of the form:
{\small
$$
p^\mu_{s,s'} =
  \sum_{i=1}^{m} \,[\prod_{j=1}^{i-1} (1-\rho^{\mu(s)(j)}_s)]\, \rho^{\mu(s)(i)}_s
 p^{\mu(s)(i)}_{s,s'}.
$$
}%
Since this is the product of $m+1$ probabilities, each with precision
$\delta$, we have that $\delta^{m+1} p^\mu_{s,s'}$ must also be integer.
Thus the required precision parameter for our MDP is at most
$\delta^{m+1}$. Plugging this into Tseng's bound, 
VI applied to $\calM_c$ must induce an optimal policy at the $t$th iteration if
{\small
$$
||V^t - v^\ast|| < 1/(2\delta^{(m+1)^{2n}} n^n)
                   = 1/(2\delta^{(m+1)2n} n^n)
.
$$
}%
This in turn gives us a bound on the number of iterations of VI
needed to reach an optimal policy:
\begin{thm} 
VI applied to $\calM_c$ converges to a value function whose greedy policy 
is optimal in $t*$ iterations, where
{\small
$$
t^* \leq \log(2\delta^{2n(m+1)} n^n M) / \log(1/\gamma)
.
$$
}
\end{thm}
Combined with Obs.~\ref{obs:VIperiteration}, we have:
\begin{cor} 
VI yields an optimal policy for the SAS-MDP corresponding to $\calM_c$
in polynomial time.
\end{cor}

We remark that in the ADS model, we only obtain an approximation to
the optimal policy. In fact, one cannot compute an exact
optimal policy without observing the entire support of the availability
distributions, which requires an exponential sample size.

Under the product distribution assumption, the theorem particularly implies that the convergence of policy iteration in strongly-polynomial time as long as each of the actions is available with constant probability. In this case, for any $\pi, a_i,a_j,s$ we have $\eta_\pi(s,a_i,a_j) \ge p_i p_j = \Omega(1)$, which in turn implies that we can take $q$ to be $\Omega(1)$ in the bound above.

\section{The Complexity of Policy Iteration}
This is an adaptation of the Hansen, Miltersen, Zwick paper.

\subsection{Policy Iteration}

Let $\mu(A|s)$ be the probability of having available actions $A$ in state $s$.
Denote $\sigma$ to be a permutation of the actions, and $\sigma(A)$ is the first available action in $A$ according to $\sigma$. Also denote $\eta_{\pi}(s,a_1,a_2) = \mu\{A : \pi^\star(s)(A) = a_1, \pi(s)(A) = a_2 | s\}$, the probability that $\pi^\star$ plays $a_1$ and simultaneously $\pi$ plays $a_2$.

\begin{algorithm}
\caption{Policy Iteration}
\begin{algorithmic}
\STATE Initialize: $\pi_1$
\FOR{$t=1,2,\ldots$}
\STATE Compute: $V^{\pi_t} = \E_{A|s} \left[ r(s, \pi_t(s)(A))+ \gamma \E_{s'|s,\pi_t(s)(A)} V^{\pi_t}(s') \right]$
\STATE Update: $\pi_{t+1}(s) = \argmax_{\sigma} \E_{A|s} \left[ r(s, \sigma(A))+ \gamma \E_{s'|s,\sigma(A)} V^{\pi_t}(s') \right]$
\ENDFOR
\end{algorithmic}
\end{algorithm}

\begin{thm}
\label{thm:vi}
$\|V^{\pi_{t+1}} - V^\star \| \le \gamma \|V^{\pi_t} - V^\star\|$.
\end{thm}

\begin{proof}
For any $s$,
\begin{align*}
V^\star(s) - V^{\pi_{t+1}}(s) &\le \gamma \max_\sigma \E_{A|s} \E_{s|s',\sigma(A)} \left( V^\star(s') - V^{\pi_t}(s') \right) \\
&\le \gamma \|V^{\pi_t} - V^\star\|~. 
\end{align*}
\end{proof}

\subsection{Polynomial bound}

In this section we will prove the following theorem.

\begin{thm}
\label{thm:strongpoly}
Assume there exists $q > 0$ such that for any policy $\pi$, state $s$ and actions $a_1,a_2$, $\eta_{\pi}(s,a_1,a_2) > 0$ implies $\eta_{\pi}(s,a_1,a_2)\ge q$. Then policy iteration converges after
\[
O \left( \frac{|S| |A|^2}{1-\gamma} \log \frac{|A|}{1-\gamma} \left( 1 + \log \frac{1}{q} \right) \right)
\]
iterations.
\end{thm}

Under the product distribution assumption, the theorem particularly implies that the convergence of policy iteration in strongly-polynomial time as long as each of the actions is available with constant probability. In this case, for any $\pi, a_i,a_j,s$ we have $\eta_\pi(s,a_i,a_j) \ge p_i p_j = \Omega(1)$, which in turn implies that we can take $q$ to be $\Omega(1)$ in the bound above.

\subsection{Proof Theorem~\ref{thm:strongpoly}}

\begin{lemma}
\label{lemma:difflb}
For any policy $\pi$, actions $a_1$, $a_2$, state $s$:
\[
V^\star(s) - V^\pi(s) \ge \eta_{\pi}(s,a_1,a_2) (Q^\star(s, a_1) - Q^{\star}(s,a_2))~.
\]
\end{lemma}

\begin{proof}
\begin{align}
V^\star(s) - V^\pi(s) &= V^\star(s) - \E_{A|s} \left[ r(s, {\pi}(s)(A))+ \gamma \E_{s'|s,{\pi}(s)(A)} V^{\pi}(s') \right]\\
&\ge V^\star(s) - \E_{A|s} \left[ r(s, {\pi}(s)(A))+ \gamma \E_{s'|s,{\pi}(s)(A)} V^\star(s') \right] \\
&= V^\star(s) - \E_{A|s} Q^\star(s, \pi(s)(A)) \\
&= \E_{A|s} \left[ Q^\star(s,\pi^\star(s)(A)) - Q^\star(s, \pi(s)(A)) \right] \\
&= \sum_A \mu\{A | s\} \left( Q^\star(s,\pi^\star(s)(A)) - Q^\star(s, \pi(s)(A)) \right) \\
&= \sum_A \mu\{A | s\} \sum_{a'_1,a'_2} \ind_{[a'_1 = \pi^\star(s)(A), a'_2=\pi(s)(A)]} \left( Q^\star(s,a'_1) - Q^\star(s, a'_2) \right) \\
&= \sum_{a'_1,a'_2} \sum_A \mu\{A | s\} \ind_{[a'_1 = \pi^\star(s)(A), a'_2=\pi(s)(A)]} \left( Q^\star(s,a'_1) - Q^\star(s, a'_2) \right) \\
&= \sum_{a'_1,a'_2} \underbrace{\mu\{A : a'_1 = \pi^\star(s)(A), a'_2=\pi(s)(A) | s\} }_{\eta_\pi(s,a'_1,a'_2)} \left( Q^\star(s,a'_1) - Q^\star(s, a'_2) \right) \\
&= \sum_{a'_1,a'_2} \eta_{\pi}(s,a'_1,a'_2) \left( Q^\star(s,a'_1) - Q^\star(s, a'_2) \right) \\
&\ge \eta_{\pi}(s,a_1,a_2) \left( Q^\star(s,a_1) - Q^\star(s, a_2) \right)~,
\end{align}
where the last inequality is due to the fact that $\pi^\star$ maximizes $Q^\star$ at any state, and for any given available action set $A$.
\end{proof}

\begin{lemma}
Let $\pi$ be any policy, and let $\bar{V} = V^{\star} - V^{\pi}$. Then $\bar{V}$ is the value function of following $\pi$ with respect to the rewards $\bar{r}(s,a) = V^\star(s) - Q^\star(s,a)$.
\end{lemma}

\begin{proof}
\begin{align}
\bar{V}(s) &= V^\star(s) - V^{\pi}(s) \\
&= V^\star(s) - \E_{A|s} \left[ r(s, \pi(s)(A))+ \gamma \E_{s'|s,\pi(s)(A)} V^{\pi}(s') \right] \\
&= V^\star(s) - \E_{A|s} \left[ Q^\star(s, \pi(s)(A)) - \gamma \E_{s'|s,\pi(s)(A)} V^\star(s')  + \gamma \E_{s'|s,\pi(s)(A)} V^{\pi}(s') \right] \\
&= \E_{A|s} \left[ V^\star(s) - Q^\star(s, \pi(s)(A)) + \gamma \E_{s'|s,\pi(s)(A)} \left( V^{\star}(s') - V^{\pi}(s') \right) \right] \\
&= \E_{A|s} \left[\bar{r}(s,\pi(s)(A)) + \gamma \E_{s'|s,\pi(s)(A)} \bar{V}(s') \right]~. 
\end{align}
\end{proof}

\begin{corollary}
\label{corr:diffub}
For any state $s$,
\[
V^\star(s) - V^\pi(s) \le \frac{|A|^2}{1-\gamma} \max_{s',a'_1,a'_2} \eta_{\pi}(s',a'_1,a'_2) \left( Q^\star(s',a'_1) - Q^\star(s',a'_2) \right)~.
\]
\end{corollary}

\begin{proof}
Following the previous lemma,
\begin{align}
V^\star(s) - V^\pi(s) &= \mathbb{E}_\pi \left[ \sum_{t=0}^\infty \gamma^t \bar{r}(s_t,a_t) \Biggr| s_0 = s\right] \\
&= \mathbb{E}_\pi \left[ \sum_{t=0}^\infty \gamma^t \E_{A|s_t} \bar{r}(s_t,\pi(s_t)(A)) \Biggr| s_0 = s\right] \\
&\le \sum_{t=0}^\infty \gamma^t \max_{s'} \left\lbrace \E_{A|s'} \bar{r}(s',\pi(s')(A)) \right\rbrace \\
&= \frac{1}{1-\gamma} \max_{s'} \sum_{a'_1,a'_2} \eta_{\pi}(s',a'_1,a'_2) \left( Q^\star(s',a'_1) - Q^\star(s',a'_2) \right) \\
&\le \frac{1}{1-\gamma} \sum_{a'_1,a'_2} \max_{s'} \eta_{\pi}(s',a'_1,a'_2) \left( Q^\star(s',a'_1) - Q^\star(s',a'_2) \right) \\
&\le \frac{|A|^2}{1-\gamma} \max_{s',a'_1,a'_2} \eta_{\pi}(s',a'_1,a'_2) \left( Q^\star(s',a'_1) - Q^\star(s',a'_2) \right)~.
\end{align}
\end{proof}

\begin{lemma}
\label{lemma:ratelb}
Let $\pi$,$\pi'$ be two policies. Let $(s',a'_1,a'_2) = \argmax_{s,a_1,a_2} \eta_{\pi}(s,a_1,a_2) (Q^\star(s,a_1) - Q^\star(s,a_2))$. 
Assume that $\eta_{\pi}(s',a'_1,a'_2) \le 2 \eta_{\pi'}(s',a'_1,a'_2)$, then
\[
\|V^\star - V^{\pi'}\| \ge \frac{1-\gamma}{2|A|^2} \| V^\star - V^{\pi} \|~.
\]
\end{lemma}

\begin{proof}
Combining \cref{lemma:difflb} and \cref{corr:diffub},
\begin{align*}
\|V^\star - V^{\pi}\| &\le \frac{|A|^2}{1-\gamma} \eta_{\pi}(s',a'_1,a'_2) \left( Q^\star(s',a'_1) - Q^\star(s', a'_2) \right) \\
&\le \frac{2|A|^2}{1-\gamma} \eta_{\pi'}(s',a'_1,a'_2) \left( Q^\star(s',a'_1) - Q^\star(s', a'_2) \right) \\
&\le \frac{2|A|^2}{1-\gamma} \|V^\star - V^{\pi'}\|~.
\end{align*}
\end{proof}

We are now ready to prove the theorem.

\begin{proof}[Proof of \cref{thm:strongpoly}]
Let $\pi_1,\ldots,\pi_T$ be the policies generated by policy iteration in $T$ iterations. Let $t < t'$ be some rounds, and denote $(s',a'_1,a'_2) = \argmax_{s,a_1,a_2} \eta_{\pi_t} (Q^\star(s,a_1) - Q^\star(s,a_2))$. If $\eta_{\pi_t}(s',a'_1,a'_2) \le 2 \eta_{\pi_{t'}}(s',a'_1,a'_2)$, then by \cref{lemma:ratelb}:
\[
\|V^\star - V^{\pi_{t'}}\| \ge \frac{1-\gamma}{2|A|^2} \| V^\star - V^{\pi_t} \|~.
\]
On the other hand, by \cref{thm:vi}:
\[
\|V^\star - V^{\pi_{t'}}\| \le \gamma^{t'-t} \| V^\star - V^{\pi_{t}} \|~,
\]
which means that $t' - t \le L$, where $L =  1/(1-\gamma) \log(2|A|^2/(1-\gamma))$.
Namely, after $\lfloor L \rfloor + 1$ iterations, we will have $\eta_{\pi_{t'}}(s',a_1',a_2') < (1/2) \eta_{\pi_{t}}(s',a_1',a_2')$.

The above argument holds for $a'_1,a'_2$ at most $\lfloor 1 + \log_2(1/q) \rfloor$ times, and thus for all states and actions at most $|S||A|^2 \lfloor 1 + \log_2(1/q) \rfloor$ times. Thus, the total number of iterations for policy iteration to converge is at most $4|S||A|^2 (1 + \log_2(1/q)) L$ as required.
\end{proof}
\end{document}